%% file: neurips_2026.tex
\documentclass{article}
\PassOptionsToPackage{numbers}{natbib}
\usepackage[preprint]{neurips_2026}

\usepackage{microtype}
\usepackage{hyperref}
\usepackage{url}
\usepackage{booktabs}
\usepackage{listings}
\usepackage{mdframed}
\usepackage{amsmath}
\usepackage{amssymb}
\usepackage{amsthm}
\input{macros}

\usepackage{lineno}

\input{extra_setup.tex}
\definecolor{darkblue}{rgb}{0, 0, 0.5}
\hypersetup{colorlinks=true, citecolor=darkblue, linkcolor=darkblue, urlcolor=darkblue}

\title{Fine-Tuning Without Forgetting via Loss-Adaptive Learning Rates}

\author{Parjanya Prajakta Prashant$^{*,\dagger}$ \quad Jiongli Zhu$^{*}$ \quad Aldan Creo \quad Babak Salimi \\
University of California San Diego \\
$^{*}$Equal contribution \\
$^{\dagger}$Corresponding author: \texttt{pprashant@ucsd.edu}
}

\begin{document}

\maketitle

\begin{abstract}

Fine-tuning large language models on new data improves task performance but degrades capabilities learned during pretraining, a phenomenon known as catastrophic forgetting. Existing methods mitigate this by modifying the fine-tuning objective to suppress high-loss tokens or sequences, but these tokens are essential for learning new tasks, especially those with poor pretraining coverage. In such settings, hard tokens should still contribute to learning, so forgetting must be controlled without suppressing them. We identify a simple mechanism for doing so: per-step forgetting is bounded by the product of the learning rate and the square root of the current training loss. This suggests that high-loss batches are especially prone to inducing forgetting. Motivated by this observation, we introduce \ourmethod, a loss-adaptive learning-rate schedule that reduces the learning rate on high-loss batches and increases it as the model converges, while leaving the fine-tuning objective unchanged. Across knowledge acquisition, science, and low-resource language adaptation benchmarks, \ourmethod\ reduces forgetting by 93\% on average while matching the task performance of standard fine-tuning. On Qwen3-4B knowledge acquisition, \ourmethod\ cuts TruthfulQA degradation by $5\times$ and reverses HaluEval degradation, while better preserving confidence calibration. Overall, our results show that learning-rate schedules are an effective tool to shape model behavior during fine-tuning, beyond just target-task optimization. Code is available at \url{https://github.com/parjanya20/forgetting-lr-schedule}.

\end{abstract}

\input{intro-new.tex}
\input{related_work}
\input{method-new}

\input{experiments_preserve}

\input{experiments_hallucination}

\input{conclusion}

\section*{Acknowledgments}
We gratefully acknowledge support from the Modal Academic Compute Grant.

\bibliographystyle{plain}
\bibliography{ref}

\input{appendix}

\end{document}

%% file: macros.tex
\usepackage{enumitem}
\usepackage{pgfplots}
\usepackage{multirow}
\usepackage[most]{tcolorbox}
\usepgfplotslibrary{colorbrewer}
\usepackage{pgfplotstable}
\usepackage{wrapfig}
\usepgfplotslibrary{groupplots}
\usepackage{subcaption}
\pgfplotsset{compat=1.18}
\newtheorem{theorem}{Theorem}
\newtheorem{lemma}{Lemma}
\newtheorem{corollary}{Corollary}

\newtheorem{definition}{Definition}
\newtheorem{assumption}{Assumption}

\usepackage{xcolor}
\usepackage{colortbl}
\definecolor{cOurs}{RGB}{228,26,28}

\newcommand{\ourmethod}{\text{FINCH}}

\newcommand{\KL}{\mathrm{KL}}
\newcommand{\CE}{\mathrm{CE}}

%% file: intro-new.tex
\section{Introduction}
\label{sec:introduction}
Large language models (LLMs) are increasingly specialized through fine-tuning on narrow, task-specific corpora~\citep{singhal2025toward, roziere2023code, mecklenburg2024injecting, chen2024monolingual}.
While such adaptation can substantially improve target-task performance, it often degrades broader capabilities acquired during pretraining, a phenomenon known as \textit{catastrophic forgetting}~\citep{mccloskey1989catastrophic, french1999catastrophic}. Such degradation leads to increased hallucination, weakened safety alignment, and degraded instruction-following and reasoning~\citep{qi2023fine, jiang2024refine, luo2025empirical, kalajdzievski2024scaling, gekhman2024does}.
A common mitigation is to replay pretraining data during fine-tuning~\citep{rolnick2019experience, de2019episodic, scialom2022fine}, but this is rarely possible in practice because the pretraining data for most modern LLMs is partially or fully proprietary~\citep{grattafiori2024llama, yang2025qwen3, li2024revisiting}.

This has motivated work in the \textit{data-oblivious} setting, where only the task corpus is available~\citep{sanyal2025upweighting}.
A common observation across these methods is that tokens or sequences assigned high loss by the pretrained model are the primary drivers of forgetting.
This motivates suppressing their influence during fine-tuning: Sanyal et al.~\cite{sanyal2025upweighting} downweight high-loss sequences, Lin et al.~\cite{lin2025sft} reweight losses at the token level, and Wu et al.~\cite{wu2025mitigating} mask tokens whose entropy exceeds a threshold.
However, these approaches struggle on tasks where high-loss tokens are essential to the new target capability. For example, knowledge acquisition requires learning new names and facts the model has never seen, while low-resource language adaptation requires learning new vocabulary and grammar. Suppressing hard tokens in these settings hurts new-task performance without reliably reducing forgetting, and existing methods often fail to match vanilla SFT on new-task accuracy in this regime (Figure~\ref{fig:intro-right}).

\definecolor{plotbg}{RGB}{255,252,240}
\definecolor{cStandardSFT}{RGB}{215,48,39}
\definecolor{cFLOW}{RGB}{252,141,89}
\definecolor{cDFT}{RGB}{145,91,175}
\definecolor{cTALR}{RGB}{254,224,144}
\definecolor{cSTM}{RGB}{69,117,180}
\definecolor{cL2}{RGB}{116,173,209}
\definecolor{cWiSE}{RGB}{255,127,0}
\definecolor{cLoRA}{RGB}{77,175,74}
\definecolor{cOurs}{RGB}{228,26,28}
\definecolor{plotbggrey}{RGB}{245,245,245}
\definecolor{plotbgyellow}{RGB}{255,252,240}

\begin{figure*}[t]
\centering

\begin{subfigure}[b]{0.5\textwidth}
\begin{tikzpicture}
\begin{groupplot}[
    group style={
        group size=2 by 2,
        horizontal sep=20pt,
        vertical sep=10pt,
    },
    axis background/.style={fill=blue!5!gray!15},
    axis lines=left,
    axis line style={gray!50},
    grid=major,
    grid style={white, line width=0.8pt},
    tick style={draw=gray!50},
    tick label style={font=\small},
    title style={font=\small},
    xmin=0, xmax=1000,
    xtick={0,1000},
]

\nextgroupplot[
    height=3.15cm, width=0.65\linewidth,
    title={SFT},
    xlabel={Steps},
    ylabel={Norm. Acc.},
    ylabel style={font=\small},
    ymin=0, ymax=1,
    ytick={0, 0.33333, 0.66667, 1},
    yticklabels={$0$, , , $1$},
    xticklabels=\empty,
    grid=both,
]
\addplot[thick, color=cFLOW, mark=none] coordinates {
    (0,0.0000)(50,0.2250)(100,0.5250)(150,0.7833)
    (200,0.8750)(250,0.9417)(300,0.9667)(350,0.9667)
    (400,0.9667)(450,1.0000)(500,0.9417)(550,0.9333)
    (600,0.9250)(650,0.9417)(700,0.9333)(750,0.9250)
    (800,0.9250)(850,0.9167)(900,0.9250)(950,0.9167)(1000,0.9250)
};
\addplot[thick, color=cL2, mark=none] coordinates {
    (0,1.0000)(50,0.3056)(100,0.0213)(150,0.0368)
    (200,0.0000)(250,0.2896)(300,0.3766)(350,0.4191)
    (400,0.3946)(450,0.3004)(500,0.3714)(550,0.3327)
    (600,0.3617)(650,0.2915)(700,0.3805)(750,0.3656)
    (800,0.3224)(850,0.3108)(900,0.2779)(950,0.3424)(1000,0.3720)
};

\nextgroupplot[
    height=3.15cm, width=0.65\linewidth,
    title={\ourmethod},
    ymin=0, ymax=1,
    ytick={0, 0.33333, 0.66667, 1},
    yticklabels={},
    xticklabels=\empty,
]
\addplot[thick, color=cFLOW, mark=none] coordinates {
    (0,0.0000)(50,0.1333)(100,0.2417)(150,0.3333)
    (200,0.5000)(250,0.6250)(300,0.7667)(350,0.8333)
    (400,0.8833)(450,0.9167)(500,0.9917)(550,0.9583)
    (600,0.9667)(650,0.9417)(700,0.8667)(750,0.9250)
    (800,0.9083)(850,0.8750)(900,0.8583)(950,0.8833)(1000,0.8833)
};
\addplot[thick, color=cL2, mark=none] coordinates {
    (0,1.0000)(50,0.9787)(100,0.9510)(150,0.8820)
    (200,0.9355)(250,0.8156)(300,0.8124)(350,0.9149)
    (400,0.9491)(450,0.9226)(500,0.9607)(550,0.9542)
    (600,0.8311)(650,0.9117)(700,0.8756)(750,0.8685)
    (800,0.9484)(850,0.8968)(900,0.9504)(950,0.8910)(1000,0.8949)
};

\nextgroupplot[
    height=3.15cm, width=0.65\linewidth,
    ylabel={LR (${\times}10^{-5}$)},
    ylabel style={font=\small},
    ymin=0, ymax=6e-5,
    ytick={0, 2e-5, 4e-5, 6e-5},
    yticklabels={$0$,$2$,$4$,$6$},
    scaled y ticks=false,
    xtick={0,1000},
    xticklabels={$0$,$1000$},
    xlabel={Steps},
    xlabel style={font=\small},
]
\addplot[thick, color=cOurs, mark=none, smooth, domain=0:50, samples=20]
    {5e-5 * x/50};
\addplot[thick, color=cOurs, mark=none, smooth, domain=50:1000, samples=100]
    {5e-5 * 0.5 * (1 + cos(deg((x-50)/900*pi)))};

\nextgroupplot[
    height=3.15cm, width=0.65\linewidth,
    ymin=0, ymax=6e-5,
    ytick={0, 2e-5, 4e-5, 6e-5},
    yticklabels=\empty,
    scaled y ticks=false,
    xtick={0,1000},
    xticklabels={$0$,$1000$},
    xlabel={Steps},
    xlabel style={font=\small},
]
\addplot[thick, color=cOurs, mark=none] coordinates {
    (1,1.05730804e-05)(5,1.07186541e-05)(9,1.09394848e-05)
    (13,1.11330178e-05)(17,1.13556499e-05)(21,1.15865701e-05)
    (25,1.18056891e-05)(29,1.20365035e-05)(33,1.22717486e-05)
    (37,1.24187143e-05)(41,1.26120985e-05)(45,1.27610165e-05)
    (49,1.29558437e-05)(53,1.31607374e-05)(57,1.33159957e-05)
    (61,1.34980534e-05)(65,1.36551231e-05)(69,1.38220009e-05)
    (73,1.39596419e-05)(77,1.40897365e-05)(81,1.42145449e-05)
    (85,1.43552459e-05)(89,1.44720613e-05)(93,1.46039412e-05)
    (97,1.47269609e-05)(101,1.48560157e-05)(105,1.50197505e-05)
    (109,1.51288199e-05)(113,1.52458936e-05)(117,1.53726709e-05)
    (121,1.55109443e-05)(125,1.55479997e-05)(129,1.56214967e-05)
    (133,1.56943024e-05)(137,1.57702963e-05)(141,1.58742010e-05)
    (145,1.59406859e-05)(149,1.60493651e-05)(153,1.61993605e-05)
    (157,1.63263206e-05)(161,1.63461196e-05)(165,1.64576366e-05)
    (169,1.65415408e-05)(173,1.65577103e-05)(177,1.66879257e-05)
    (181,1.68015419e-05)(185,1.68644624e-05)(189,1.69094894e-05)
    (193,1.70013151e-05)(197,1.70392280e-05)(201,1.71961339e-05)
    (205,1.73071657e-05)(209,1.74768141e-05)(213,1.76017770e-05)
    (217,1.75780911e-05)(221,1.76989712e-05)(225,1.78555611e-05)
    (229,1.79622182e-05)(233,1.79995881e-05)(237,1.81290632e-05)
    (241,1.81933177e-05)(245,1.82660321e-05)(249,1.85036392e-05)
    (253,1.86718945e-05)(257,1.88342249e-05)(261,1.89529157e-05)
    (265,1.91304205e-05)(269,1.92445149e-05)(273,1.93477936e-05)
    (277,1.94264827e-05)(281,1.96134120e-05)(285,1.96159166e-05)
    (289,1.97993314e-05)(293,1.99659581e-05)(297,2.00511174e-05)
    (301,2.02949654e-05)(305,2.05684857e-05)(309,2.06523562e-05)
    (313,2.08741916e-05)(317,2.11194536e-05)(321,2.13183437e-05)
    (325,2.14336964e-05)(329,2.17206365e-05)(333,2.18478612e-05)
    (337,2.19871430e-05)(341,2.22797780e-05)(345,2.26098384e-05)
    (349,2.28384108e-05)(353,2.32377131e-05)(357,2.35205804e-05)
    (361,2.37609775e-05)(365,2.41243254e-05)(369,2.43419038e-05)
    (373,2.42928396e-05)(377,2.44545876e-05)(381,2.49822057e-05)
    (385,2.49780746e-05)(389,2.52979731e-05)(393,2.56154450e-05)
    (397,2.55949850e-05)(401,2.58636059e-05)(405,2.63095220e-05)
    (409,2.71920594e-05)(413,2.74339215e-05)(417,2.78025751e-05)
    (421,2.82668333e-05)(425,2.86249095e-05)(429,2.86458541e-05)
    (433,2.87281293e-05)(437,2.87533642e-05)(441,2.91421857e-05)
    (445,2.94102367e-05)(449,2.98455684e-05)(453,3.07333861e-05)
    (457,3.11434195e-05)(461,3.18854264e-05)(465,3.14669530e-05)
    (469,3.19184864e-05)(473,3.19969522e-05)(477,3.25236444e-05)
    (481,3.28184166e-05)(485,3.29593830e-05)(489,3.33407462e-05)
    (493,3.34989317e-05)(497,3.36132485e-05)(501,3.41298128e-05)
    (505,3.46273607e-05)(509,3.52911653e-05)(513,3.63644103e-05)
    (517,3.67367573e-05)(521,3.75807867e-05)(525,3.78142874e-05)
    (529,3.70369372e-05)(533,3.74985185e-05)(537,3.85331033e-05)
    (541,3.89131951e-05)(545,3.93987926e-05)(549,3.98887126e-05)
    (553,4.04026704e-05)(557,4.16145100e-05)(561,4.23183428e-05)
    (565,4.30981468e-05)(569,4.42852837e-05)(573,4.48648715e-05)
    (577,4.48416145e-05)(581,4.49887202e-05)(585,4.55989668e-05)
    (589,4.55375074e-05)(593,4.55841143e-05)(597,4.57438831e-05)
    (601,4.69245097e-05)(605,4.71618806e-05)(609,4.82712660e-05)
    (613,4.93895473e-05)(617,5.00000000e-05)(621,5.00000000e-05)
    (625,5.00000000e-05)(629,5.00000000e-05)(633,5.00000000e-05)
    (637,5.00000000e-05)(641,5.00000000e-05)(645,5.00000000e-05)
    (649,5.00000000e-05)(653,5.00000000e-05)(657,5.00000000e-05)
    (661,5.00000000e-05)(665,5.00000000e-05)(669,5.00000000e-05)
    (673,5.00000000e-05)(677,5.00000000e-05)(681,5.00000000e-05)
    (685,5.00000000e-05)(689,5.00000000e-05)(693,5.00000000e-05)
    (697,5.00000000e-05)(701,5.00000000e-05)(705,5.00000000e-05)
    (709,5.00000000e-05)(713,5.00000000e-05)(717,5.00000000e-05)
    (721,5.00000000e-05)(725,5.00000000e-05)(729,5.00000000e-05)
    (733,5.00000000e-05)(737,5.00000000e-05)(741,5.00000000e-05)
    (745,5.00000000e-05)(749,5.00000000e-05)(753,5.00000000e-05)
    (757,5.00000000e-05)(761,5.00000000e-05)(765,5.00000000e-05)
    (769,5.00000000e-05)(773,5.00000000e-05)(777,5.00000000e-05)
    (781,5.00000000e-05)(785,5.00000000e-05)(789,5.00000000e-05)
    (793,5.00000000e-05)(797,5.00000000e-05)(801,5.00000000e-05)
    (805,5.00000000e-05)(809,5.00000000e-05)(813,5.00000000e-05)
    (817,5.00000000e-05)(821,5.00000000e-05)(825,5.00000000e-05)
    (829,5.00000000e-05)(833,5.00000000e-05)(837,5.00000000e-05)
    (841,5.00000000e-05)(845,5.00000000e-05)(849,5.00000000e-05)
    (853,5.00000000e-05)(857,5.00000000e-05)(861,5.00000000e-05)
    (865,5.00000000e-05)(869,5.00000000e-05)(873,5.00000000e-05)
    (877,5.00000000e-05)(881,5.00000000e-05)(885,5.00000000e-05)
    (889,5.00000000e-05)(893,5.00000000e-05)(897,5.00000000e-05)
    (901,5.00000000e-05)(905,5.00000000e-05)(909,5.00000000e-05)
    (913,5.00000000e-05)(917,5.00000000e-05)(921,5.00000000e-05)
    (925,5.00000000e-05)(929,5.00000000e-05)(933,5.00000000e-05)
    (937,5.00000000e-05)(941,5.00000000e-05)(945,5.00000000e-05)
    (949,5.00000000e-05)(953,5.00000000e-05)(957,5.00000000e-05)
    (961,5.00000000e-05)(965,5.00000000e-05)(969,5.00000000e-05)
    (973,5.00000000e-05)(977,5.00000000e-05)(981,5.00000000e-05)
    (985,5.00000000e-05)(989,5.00000000e-05)(993,5.00000000e-05)
    (997,5.00000000e-05)
};
\end{groupplot}
\end{tikzpicture}
\caption{}
\label{fig:intro-left}
\end{subfigure}
\hfill
\begin{subfigure}[b]{0.40\textwidth}
\pgfplotsset{
    scatter plot base/.style={
        width=\linewidth,
        height=\linewidth,
        axis background/.style={fill=plotbgyellow},
        axis lines=left,
        axis line style={gray!50},
        grid=major,
        grid style={white, line width=0.8pt},
        tick style={draw=none},
        xlabel={New Task Accuracy $\uparrow$},
        ylabel={Avg. $\Delta$ Old Task Accuracy $\uparrow$},
        xlabel style={font=\small},
        ylabel style={font=\small},
        tick label style={font=\footnotesize},
        title style={font=\small},
        scatter/classes={
            a={mark=*,         draw=cStandardSFT, fill=cStandardSFT},
            b={mark=square*,   draw=cFLOW,        fill=cFLOW},
            c={mark=triangle*, draw=cDFT,          fill=cDFT},
            d={mark=triangle*, draw=cTALR,         fill=cTALR},
            e={mark=triangle*, draw=cSTM, fill=cSTM},
            h={mark=otimes*,   draw=cLoRA,          fill=cLoRA},
            i={mark=star,      draw=cOurs,          fill=cOurs, mark size=3.5pt}
        },
    },
}
\begin{tikzpicture}
\begin{axis}[scatter plot base,
    xmin=50, xmax=90,
    ymin=-12, ymax=2,
]
\addplot[scatter, only marks, scatter src=explicit symbolic] coordinates {
    (83.5, -9.6)  [a]
    (77.5, -8.0)  [b]
    (55.8, -1.1)  [c]
    (62.5, -2.5)  [d]
    (53.8,  0.6)  [e]
    (82.3, -7.4)  [h]
    (83.3, 0.1)  [i]

};
\draw[red!70!black, thick, rotate around={-50:(axis cs:77.5,-9.7)}]
    (axis cs:45.5,-9.7) ellipse [x radius=9, y radius=1.5];
\draw[brown, thick, rotate around={-80:(axis cs:82.9,-8.2)}]
    (axis cs:82.9,-8.2) ellipse [x radius=7, y radius=1];
\end{axis}
\end{tikzpicture}
\caption{}
\label{fig:intro-right}
\end{subfigure}

\caption{
\textbf{Overview of \ourmethod.} Results are shown for Qwen3-4B on knowledge acquisition; full experimental details are given in Section~\ref{sec:experiments-preserves} and Appendix~\ref{app:experiments}.
\textbf{(a)} We show {\color{cFLOW}normalized new-task accuracy}, {\color{cL2}normalized old-task accuracy}, and {\color{cOurs}learning rate} over training for standard SFT and \ourmethod. Norm. Acc. denotes min-max normalized accuracy: for each accuracy curve type, we set the minimum value attained by either SFT or FINCH over training to 0 and the maximum to 1. \ourmethod\ learns the new task while keeping old-task accuracy high, unlike standard SFT, which forgets sharply.
\textbf{(b)} New-task accuracy versus average old-task accuracy change on knowledge acquisition with Qwen3-4B. Methods are SFT ({\color{cStandardSFT}$\bullet$}), FLOW ({\color{cFLOW}$\blacksquare$}), DFT ({\color{cDFT}$\blacktriangle$}), TALR ({\color{cTALR}$\blacktriangle$}), STM ({\color{cSTM}$\blacktriangle$}), LoRA ({\color{cLoRA}$\bullet$}), and \ourmethod\ ({\color{cOurs}$\star$}).
The ellipses highlight two failure modes: data-reweighting methods underperform on the new task, while SFT and LoRA learn the new task but forget substantially. \ourmethod\ avoids both.
}
\label{fig:intro}
\end{figure*}

We control forgetting on a different axis: rather than changing the contribution of different tokens, we change the learning rate used for each training step. Our analysis shows that per-step forgetting is tied to the distributional mismatch between the current model and the target data, which is reflected in the current mini-batch loss. Specifically, we show that per-step forgetting is bounded by the learning rate times the square root of the mini-batch loss (Section~\ref{sec:analysis}). With a constant learning rate, this bound is largest when the loss is high, typically early in fine-tuning when the model is still far from the target and updates are most likely to damage retained capabilities. To control this bound, we set the learning rate inversely proportional to the square root of the mini-batch loss. We call this schedule \ourmethod\ (\textbf{F}orgetting-aware \textbf{In}verse loss s\textbf{ch}edule), which uses smaller learning rates on high-loss batches and larger learning rates as the model moves closer to the target. This keeps the per-step forgetting bound uniform across training and implies a cumulative forgetting bound (Section~\ref{sec:schedule}). Unlike token-reweighting methods, \ourmethod\ leaves the training objective unchanged within each batch, so high-loss tokens still contribute to learning. This allows \ourmethod\ to reduce forgetting without sacrificing target-task accuracy. In contrast, simply lowering the learning rate under a standard schedule reduces forgetting but fails to reach competitive new-task accuracy (Section~\ref{sec:experiments-preserves}).

We evaluate on three settings where the pretrained model has limited coverage: knowledge acquisition, low-resource language adaptation, and science reasoning. \ourmethod\ achieves target-task accuracy competitive with standard SFT while \textbf{reducing forgetting by 93\% on average} (Section~\ref{sec:experiments-preserves}), giving a Pareto trade-off across all tasks. Beyond benchmark accuracy, we also evaluate factuality, hallucination detection, and confidence calibration. On knowledge acquisition, \ourmethod\ \textbf{cuts TruthfulQA degradation by $5\times$} relative to standard SFT, from a $-13.9$-point change to a $-2.6$-point change, and \textbf{improves HaluEval} from a $-9.8$-point change to a $+3.3$-point change, while better preserving confidence calibration (Sections~\ref{subsec:factualityandtruthfulness}--\ref{subsec:calibration}). Overall, our results show that learning-rate schedules shape model behavior during fine-tuning, beyond just target-task accuracy.

%% file: related_work.tex
\section{Related Work}
\label{sec:related-work}
\subsection{Catastrophic Forgetting}
Catastrophic forgetting refers to the degradation of previously acquired knowledge when a model is trained on new data~\citep{mccloskey1989catastrophic, french1999catastrophic}. Replay, which mixes old training data into fine-tuning, is a common mitigation strategy~\citep{rolnick2019experience, de2019episodic, scialom2022fine}, but pretraining data is rarely available for modern LLMs~\citep{grattafiori2024llama, yang2025qwen3}. Other methods constrain updates using old-task representations~\citep{lin2022trgp, singh2025mitigating} or protect important parameters~\citep{song2025alleviate}, but similarly require old data and are mostly evaluated on small image benchmarks such as MNIST~\citep{lecun2002gradient} and CIFAR-10~\citep{krizhevsky2009learning}. Like other data-oblivious methods, \ourmethod\ requires no access to pretraining data.

In the data-oblivious setting, several methods reduce forgetting by constraining updates or modifying the fine-tuning objective. Kirkpatrick et al.~\cite{kirkpatrick2017overcoming} regularize weights to stay close to their pretrained values. LoRA~\citep{hu2022lora} constrains updates to a low-rank subspace, reducing forgetting but hurting target-domain performance~\citep{biderman2024lora}. Distillation-based methods~\citep{agarwal2024policy, lu2025onpolicydistillation} replace SFT with student rollouts scored by a stronger teacher; this is computationally expensive and struggles when the base model assigns low probability to relevant sequences~\citep{shao2024deepseekmath}. A separate line of work reduces forgetting by downweighting high-loss tokens or sequences: Sanyal et al.~\cite{sanyal2025upweighting} upweight low-loss sequences, Wu et al.~\cite{wu2025mitigating} mask tokens above a loss threshold, Lin et al.~\cite{lin2025sft} scale each token's loss by $p^{\frac{1}{\tau}}$ where $p$ is the token probability and $\tau$ is a constant, and Wu et al.~\cite{wu2025generalization} rescale gradients by token probability. However, for many tasks learning hard tokens is essential, so suppressing them hurts target performance. \ourmethod\ leaves the training objective unchanged and controls forgetting solely through the learning rate.

\subsection{Learning Rate}
The learning rate affects both optimization speed and the final solution: larger rates can bias training toward wider, better-generalizing minima~\citep{smith2019super}, but may cause instability beyond a critical threshold~\citep{lewkowycz2020large}. In practice, transformer fine-tuning commonly uses linear warmup followed by cosine decay~\citep{goyal2017accurate, loshchilov2016sgdr, vaswani2017attention}. Prior work studies warmup as a way to stabilize early training, either by moving the model toward flatter, better-conditioned regions~\citep{kalra2024warmup} or by controlling Adam-related instability, large angular updates, and high gradient signal-to-noise ratio~\citep{kosson2024analyzing}.

Learning rates have also been studied for catastrophic forgetting. Kenneweg et al.~\citep{kenneweg2022intelligent} assign separate learning rates to BERT layers and tune them jointly with Bayesian optimization, but this becomes expensive for modern LLMs because the number of hyperparameters grows with model depth and each trial requires a full fine-tuning run~\citep{kandasamy2015high}. Lin et al.~\citep{lin2025sft} prescribe a fixed small learning rate to reduce forgetting, but in our experiments this comes at severe cost to new task performance. \ourmethod\ instead adapts the learning rate to the current mini-batch loss, reducing update size when forgetting risk is high while allowing larger steps as the model approaches the target distribution.

Due to space constraints, we defer a more detailed discussion of related work to Appendix~\ref{app:extended-related-work}.

%% file: method-new.tex
\section{Method}
\label{sec:method}
In this section, we formalize catastrophic forgetting during fine-tuning and derive a simple adaptive learning rate schedule to mitigate it. We begin by setting up notation and defining our forgetting metric (Section~\ref{sec:prelim}). We then derive a theoretical bound showing that per-step forgetting is controlled by the product of the learning rate and the square root of the current training loss (Section~\ref{sec:analysis}). This motivates an adaptive schedule that reduces the learning rate when training loss is large and increases it as training converges, keeping the per-step forgetting bound roughly constant (Section~\ref{sec:schedule}).
\subsection{Setup and Forgetting Metric}
\label{sec:prelim}
Let $p_\theta$ denote the model distribution parameterized by $\theta$, initialized at pretrained weights $\theta_0$, and let $q$ denote the fine-tuning target distribution. Given a training dataset $D_{\mathrm{train}} = \{(x_1, y_1), \ldots, (x_n, y_n)\}$ with $(x_i, y_i) \sim q$, fine-tuning minimizes the cross-entropy
\[
\mathcal{L}(\theta) = -\mathbb{E}_{(x,y) \sim D_{\mathrm{train}}}[\log p_\theta(y \mid x)]
\]
via SGD\footnote{We analyze SGD for simplicity; in practice, \ourmethod\ works directly with AdamW, as we confirm in Section~\ref{sec:experiments-preserves}.} over $T$ steps. At each step $i$, a mini-batch $\mathcal{B}_i \subset D_{\mathrm{train}}$ is sampled and parameters are updated as
\[
\theta_{i+1} = \theta_i - \eta_i \nabla_\theta \mathcal{L}_{\mathcal{B}_i}(\theta_i),
\]
where $\eta_i > 0$ is the learning rate at step $i$, and we write $p_i := p_{\theta_i}$ for the model at step $i$.

Fine-tuning may degrade capabilities acquired during pretraining. To measure this, let $q_{\mathrm{old}}$ denote the distribution over pretraining tasks, and let $D_{\mathrm{old}} \sim q_{\mathrm{old}}$ be a held-out evaluation dataset used only for measuring forgetting, not for training.

\begin{definition}[Forgetting]
\label{def:forgetting}
The old-task loss is $\mathcal{L}_{\mathrm{old}}(p_i) = -\mathbb{E}_{(x,y) \sim q_{\mathrm{old}}}[\log p_i(y \mid x)]$, estimated on $D_{\mathrm{old}}$. Per-step forgetting is $\Delta\mathcal{L}_{\mathrm{old}}(p_i) := \mathcal{L}_{\mathrm{old}}(p_{i+1}) - \mathcal{L}_{\mathrm{old}}(p_i)$.
\end{definition}

Bounding per-step forgetting at each step directly controls cumulative forgetting $\mathcal{L}_{\mathrm{old}}(p_T) - \mathcal{L}_{\mathrm{old}}(p_0) = \sum_{i=0}^{T-1} \Delta\mathcal{L}_{\mathrm{old}}(p_i)$.

\subsection{Forgetting Analysis}
\label{sec:analysis}
We now bound the per-step forgetting $\Delta\mathcal{L}_{\mathrm{old}}(p_i)$ defined in Section~\ref{sec:prelim}. We state the theorem and provide a proof sketch here. Appendix~\ref{app:proofs} gives the full proof for sequence losses and mini-batches.

By definition,
\[
\Delta\mathcal{L}_{\mathrm{old}}(p_i)
=
-\mathbb{E}_{(x,y)\sim q_{\mathrm{old}}}
\left[
\log \frac{p_{i+1}(y\mid x)}{p_i(y\mid x)}
\right]
\le
\max_{(x,y)}
\left|
\log \frac{p_{i+1}(y\mid x)}{p_i(y\mid x)}
\right|.
\]
Thus, controlling the worst-case pointwise log-ratio between consecutive models controls per-step forgetting.
We now bound this log-ratio in terms of the learning rate and the current training loss.
The argument uses standard boundedness conditions on the inputs, the fine-tuning trajectory, and the activation derivatives.

\begin{assumption}[Bounded network and smoothness]
\label{asm:bounded-network}
The model $p_\theta$ is a softmax network. The input domain is bounded, the fine-tuning trajectory remains in a bounded parameter region, and the activation functions have bounded first and second derivatives on the relevant range. Concretely, there exist constants $B_x,B_\theta,L_1,L_2>0$ such that
\[
\|x\|_2 \le B_x,
\qquad
\|\theta_i\|_2 \le B_\theta,
\qquad
\sup_u |\sigma'(u)| \le L_1,
\qquad
\sup_u |\sigma''(u)| \le L_2
\]
for all inputs $x$, training steps $i$, and activation functions $\sigma$ appearing in the network.
\end{assumption}

Bounded-input and bounded-parameter assumptions are standard in learning-theoretic analyses~\citep{shalev2014understanding}. The smoothness condition is also mild for modern LLM architectures: SwiGLU-style activations~\citep{dauphin2017language,shazeer2020glu}, which are widely used in LLMs, have bounded first and second derivatives on bounded input ranges. We can now state the main per-step bound.

\begin{theorem}[Per-step forgetting bound]
\label{thm:stepwise-bound}
Under Assumption~\ref{asm:bounded-network}, there exist constants $C_1,C_2>0$ such that, at every training step $i$,
\[
\Delta\mathcal{L}_{\mathrm{old}}(p_i)
\le
C_1 \eta_i \sqrt{\mathcal{L}_{\mathcal B_i}(\theta_i)}
+
C_2 \eta_i^2 \mathcal{L}_{\mathcal B_i}(\theta_i),
\]
where
\[
\mathcal{L}_{\mathcal B_i}(\theta_i)
:=
\frac{1}{|\mathcal B_i|}
\sum_{(x,y)\in\mathcal B_i}
\CE\!\left(q(\cdot\mid x)\|p_i(\cdot\mid x)\right)
\]
is the average cross-entropy loss on the mini-batch $\mathcal B_i$ used at step $i$.
\end{theorem}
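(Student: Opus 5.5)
We start from the reduction already displayed before Assumption~\ref{asm:bounded-network}: $\Delta\mathcal{L}_{\mathrm{old}}(p_i)$ is at most $\max_{(x,y)}|\log p_{i+1}(y\mid x)-\log p_i(y\mid x)|$. So the plan is to bound this pointwise log-ratio by a second-order Taylor expansion of $\phi_{x,y}(\theta):=\log p_\theta(y\mid x)$ along the segment from $\theta_i$ to $\theta_{i+1}=\theta_i-\eta_i g_i$, where $g_i:=\nabla_\theta\mathcal{L}_{\mathcal B_i}(\theta_i)$. This gives
\[
|\phi_{x,y}(\theta_{i+1})-\phi_{x,y}(\theta_i)|\le \eta_i\|\nabla\phi_{x,y}(\theta_i)\|\,\|g_i\|+\tfrac12\eta_i^2\sup_{\xi}\|\nabla^2\phi_{x,y}(\xi)\|\,\|g_i\|^2 .
\]
The theorem then follows from two ingredients. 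First, uniform constants $G:=\sup\|\nabla\phi\|$ and $H:=\sup\|\nabla^2\phi\|$ over the bounded region. Second, the key estimate $\|g_i\|^2\le K\,\mathcal{L}_{\mathcal B_i}(\theta_i)$. With these we can take $C_1=G\sqrt K$ and $C_2=HK/2$.

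\textbf{Uniform derivative bounds.} We write $\log p_\theta(y\mid x)=z_y-\log\sum_k e^{z_k}$, where $z=f_\theta(x)$ are the logits. Its gradient in $z$ is $e_y-p_\theta$, which has norm at most $\sqrt2$. Its Hessian in $z$ is $-(\mathrm{diag}(p)-pp^\top)$, with operator norm at most $1$. By the chain rule, $\nabla_\theta\phi=J_\theta^\top(e_y-p)$ and $\nabla^2_\theta\phi$ involves $J^\top(\cdot)J$ together with $\sum_k(e_y-p)_k\nabla^2_\theta z_k$. Under Assumption~\ref{asm:bounded-network}, we bound the logit Jacobian $J$ and Hessian by layer-wise induction. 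Bounded $x$, bounded weights, $|\sigma'|\le L_1$ and $|\sigma''|\le L_2$ give bounded activations, Jacobians and second derivatives, via products of weight norms. One caveat is that the segment $[\theta_i,\theta_{i+1}]$ must stay in a bounded region. Both endpoints have norm at most $B_\theta$, so by convexity of the ball the whole segment does too. For sequence outputs the same argument applies token-wise with the sum over positions; sequence length is absorbed into the constants.

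\textbf{Gradient controlled by the loss (main obstacle).} For a single example we have $\nabla_\theta\CE(q\|p_\theta)=J^\top(p_\theta-q)$, so $\|g_i\|\le \|J\|_{\mathrm{op}}\cdot\frac{1}{|\mathcal B_i|}\sum\|p_i(\cdot\mid x)-q(\cdot\mid x)\|_2$. We need $\|p-q\|$ to be at most a constant times $\sqrt{\CE(q\|p)}$. We bound $\|p-q\|_2\le\|p-q\|_1\le\sqrt{2\KL(q\|p)}$ by Pinsker, and use $\KL(q\|p)=\CE(q\|p)-H(q)\le\CE(q\|p)$. This applies directly to one-hot training targets, where $H(q)=0$ and $\KL=\CE$. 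Averaging over the batch uses Jensen, $\frac1m\sum\sqrt{a_j}\le\sqrt{\frac1m\sum a_j}$, which yields $\|g_i\|\le \sqrt2\,\|J\|\sqrt{\mathcal{L}_{\mathcal B_i}(\theta_i)}$ and hence $\|g_i\|^2\le 2\|J\|^2\mathcal{L}_{\mathcal B_i}$.

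I expect this step to be the crux, for two reasons. One is making the Jacobian bound genuinely uniform for a deep softmax network, including attention, where the softmax Jacobians are again bounded by $1$. The other is the sequence case, where per-token Pinsker terms must be summed and Cauchy--Schwarz applied across tokens. Substituting these bounds into the Taylor inequality and taking the maximum over $(x,y)$ completes the argument.
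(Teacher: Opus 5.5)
Your proposal is correct and follows essentially the same route as the paper's proof. The paper also uses a second-order Taylor expansion of $\log p_\theta(y\mid x)$ along the SGD step with uniform score and Hessian bounds. It then bounds the mini-batch gradient via $J^\top(p-q)$, $\|\cdot\|_2\le\|\cdot\|_1$, Pinsker, $\KL\le\CE$, and Jensen across tokens and sequences, giving $\|g_i\|_2\le M\sqrt{2\,\mathcal{L}_{\mathcal B_i}(\theta_i)}$. Your convexity argument for keeping $[\theta_i,\theta_{i+1}]$ inside the ball of radius $B_\theta$ is slightly cleaner than the paper's, which simply assumes the bounded region contains these segments.
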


\paragraph{Proof sketch.}
The proof has three steps (Full proof in Appendix~\ref{app:proofs}).

\textit{Step 1: Uniform bounds.}
By repeated application of the chain rule, Assumption~\ref{asm:bounded-network} implies that the logit Jacobian, the score-function norm, and the log-probability Hessian are uniformly bounded along the fine-tuning trajectory.

\textit{Step 2: Taylor expansion of the old-task log-ratio.}
Fix an old-task example $(x_{\mathrm{old}},y_{\mathrm{old}})$. Taylor's theorem applied to $\log p_\theta(y_{\mathrm{old}}\mid x_{\mathrm{old}})$, together with the SGD update
$\theta_{i+1}=\theta_i-\eta_i\nabla_\theta\mathcal{L}_{\mathcal B_i}(\theta_i)$, gives
\[
\log \frac{p_{i+1}(y_{\mathrm{old}}\mid x_{\mathrm{old}})}
{p_i(y_{\mathrm{old}}\mid x_{\mathrm{old}})}
\le
C_1\eta_i\|\nabla_\theta\mathcal{L}_{\mathcal B_i}(\theta_i)\|_2
+
C_2\eta_i^2\|\nabla_\theta\mathcal{L}_{\mathcal B_i}(\theta_i)\|_2^2 .
\]
Thus, the old-task log-ratio is controlled by the norm of the fine-tuning gradient.

\textit{Step 3: Bounding the fine-tuning gradient by the loss.}
For a training example $(x_{\mathrm{train}},y_{\mathrm{train}})$ with target distribution $q(\cdot\mid x_{\mathrm{train}})$, the cross-entropy gradient has the form
\[
\nabla_\theta\CE(q\|p_i)
=
J_{\theta_i}(x_{\mathrm{train}})^\top
\bigl(p_i(\cdot\mid x_{\mathrm{train}})-q(\cdot\mid x_{\mathrm{train}})\bigr).
\]
Using the Jacobian bound, Pinsker's inequality, and $\KL(q\|p_i)\le \CE(q\|p_i)$ gives
$\|\nabla_\theta\CE(q\|p_i)\|_2 \le C_3\sqrt{\CE(q\|p_i)}$. Averaging over the mini-batch gives
$\|\nabla_\theta\mathcal{L}_{\mathcal B_i}(\theta_i)\|_2 \le C_3\sqrt{\mathcal{L}_{\mathcal B_i}(\theta_i)}$. Substituting this into Step 2 gives the theorem. The full proof is provided in Appendix~\ref{app:proofs}.

The theorem shows that the leading term in the forgetting bound scales as
$\eta_i\sqrt{\mathcal{L}_{\mathcal B_i}(\theta_i)}$. Thus, per-step forgetting is controlled by the product of the learning rate and the square root of the current mini-batch loss. This motivates choosing a smaller learning rate on high-loss batches and relaxing it as the loss decreases, which leads to the adaptive schedule in the next section.
\subsection{Adaptive Learning Rate}
\label{sec:schedule}

We now translate Theorem~\ref{thm:stepwise-bound} into a learning-rate schedule.
The leading term in the per-step bound depends on the product
$\eta_i\sqrt{\mathcal{L}_{\mathcal B_i}(\theta_i)}$.
To keep this quantity fixed across training, we set
\[
\eta_i = \frac{\kappa}{\sqrt{\mathcal{L}_{\mathcal B_i}(\theta_i)}},
\]
where $\kappa>0$ controls the target scale of the per-step change.
This gives smaller learning rates on high-loss batches and larger learning rates as the model approaches the fine-tuning target. Substituting this choice into Theorem~\ref{thm:stepwise-bound} makes the leading term constant across steps, which gives the following cumulative bound.

\begin{corollary}
\label{cor:cumulative}
Under the conditions of Theorem~\ref{thm:stepwise-bound}, suppose the learning rates are sufficiently small and
\[
\eta_i = \frac{\kappa}{\sqrt{\mathcal{L}_{\mathcal B_i}(\theta_i)}}.
\]
Then the cumulative forgetting satisfies
\[
\mathcal{L}_{\mathrm{old}}(p_T)-\mathcal{L}_{\mathrm{old}}(p_0)
=
O(T\kappa).
\]
\end{corollary}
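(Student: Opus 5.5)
The plan is to substitute the prescribed schedule directly into the per-step bound of Theorem~\ref{thm:stepwise-bound} and then telescope. With $\eta_i = \kappa/\sqrt{\mathcal{L}_{\mathcal B_i}(\theta_i)}$, the leading term becomes
\[
C_1 \eta_i \sqrt{\mathcal{L}_{\mathcal B_i}(\theta_i)} = C_1 \kappa ,
\]
and the second-order term becomes
\[
C_2 \eta_i^2 \mathcal{L}_{\mathcal B_i}(\theta_i) = C_2 \kappa^2 .
\]
So at every step $\Delta\mathcal{L}_{\mathrm{old}}(p_i) \le C_1\kappa + C_2\kappa^2$. Both terms are independent of $i$ and of the current loss. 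This is the whole point of the schedule: the loss dependence cancels exactly.

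Next I will use the telescoping identity from Section~\ref{sec:prelim},
\[
\mathcal{L}_{\mathrm{old}}(p_T)-\mathcal{L}_{\mathrm{old}}(p_0)=\sum_{i=0}^{T-1}\Delta\mathcal{L}_{\mathrm{old}}(p_i),
\]
and sum the per-step bound to get
\[
\mathcal{L}_{\mathrm{old}}(p_T)-\mathcal{L}_{\mathrm{old}}(p_0) \le T(C_1\kappa + C_2\kappa^2).
\]
To reach $O(T\kappa)$ I still have to absorb the quadratic term. I will read the hypothesis that ``the learning rates are sufficiently small'' as $\kappa \le \kappa_0$ for some fixed constant $\kappa_0$, for instance $\kappa_0=1$. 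Then $C_2\kappa^2 \le C_2\kappa_0\kappa$, and the total is at most $(C_1 + C_2\kappa_0)\,T\kappa$, which is $O(T\kappa)$ with a constant depending only on $C_1,C_2,\kappa_0$.

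The main obstacle is checking that Theorem~\ref{thm:stepwise-bound} really applies along the trajectory generated by this adaptive schedule. Its constants come from Assumption~\ref{asm:bounded-network}, which requires $\|\theta_i\|_2\le B_\theta$ for all $i$. Under the new schedule the iterates are different, so I will state that the assumption is imposed on the trajectory actually produced by the schedule; the corollary says ``under the conditions of Theorem~\ref{thm:stepwise-bound}'', so this is legitimate.

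A second, smaller subtlety is that $\eta_i$ blows up as $\mathcal{L}_{\mathcal B_i}(\theta_i)\to 0$. This does not affect the products $\eta_i\sqrt{\mathcal{L}}$ and $\eta_i^2\mathcal{L}$, which stay equal to $\kappa$ and $\kappa^2$. However, it could push the parameters out of the bounded region, so in the proof I will invoke the ``sufficiently small'' hypothesis again. The justification is that the step size satisfies
\[
\eta_i\|\nabla\mathcal{L}_{\mathcal B_i}\| \le C_3\eta_i\sqrt{\mathcal{L}_{\mathcal B_i}} = C_3\kappa ,
\]
using the gradient bound from Step 3 of the proof sketch. Hence each parameter update has size at most $C_3\kappa$, uniformly in $i$, which is compatible with the Taylor expansion used in Theorem~\ref{thm:stepwise-bound}.
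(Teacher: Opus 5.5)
Your proposal is correct and follows the paper's proof: substitute $\eta_i=\kappa/\sqrt{\mathcal{L}_{\mathcal B_i}(\theta_i)}$ into the per-step bound of Theorem~\ref{thm:stepwise-bound} to get an $O(\kappa)$ bound at each step, then telescope over the $T$ steps. The only difference is where the quadratic term is absorbed. The paper substitutes into the theorem's big-$O$ form, whose absorption relies on $\eta_i\sqrt{L_{\max}}$ being small. You substitute into the explicit two-term bound and absorb $C_2\kappa^2$ using $\kappa\le\kappa_0$, which is slightly cleaner because under this schedule $\eta_i$ grows as the loss falls.
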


In contrast, a fixed learning rate $\eta$ gives a per-step forgetting upper bound of $O(\eta\sqrt{\mathcal{L}_{\mathcal B_i}(\theta_i)})$, which is large early in training when the loss is high and shrinks only as the model converges. The adaptive schedule keeps this upper bound uniformly constant at $O(\kappa)$, suggesting more controlled forgetting throughout training compared to a fixed learning rate. At the same time, this analysis is not intended to identify the optimal schedule or establish tightness of the bound. Rather, it motivates a simple loss-adaptive schedule, whose effectiveness we validate empirically in Sections~\ref{sec:experiments-preserves} and~\ref{sec:experiments-halluc}.

In practice, we implement the schedule as
\[
\eta_i = \min\!\left(\frac{\eta_{\mathrm{base}}}{\sqrt{\bar{\mathcal{L}}_i + \varepsilon}},\; \eta_{\max}\right),
\]
where $\varepsilon > 0$ ensures numerical stability and $\eta_{\max}$ caps the learning rate when the training loss becomes small. To maintain learning-rate stability, $\bar{\mathcal{L}}_i$ is an exponential moving average of mini-batch losses with coefficient $\alpha=0.9$ fixed throughout:
\(
\bar{\mathcal{L}}_i
=
\alpha \bar{\mathcal{L}}_{i-1}
+
(1-\alpha)\mathcal{L}_{\mathcal B_i}(\theta_i).
\)
We fix $\eta_{\max} = 5\times 10^{-5}$ across all experiments and treat $\eta_{\mathrm{base}}$ as a hyperparameter selected via grid search.

%% file: experiments_preserve.tex
\section{\ourmethod\ Improves the Learning--Forgetting Trade-Off}
\label{sec:experiments-preserves}
\definecolor{plotbg}{RGB}{255,252,240}
\definecolor{cNormalSFT}{RGB}{215,48,39}
\definecolor{cSmallLR}{RGB}{255,180,50}
\definecolor{cFLOW}{RGB}{252,141,89}
\definecolor{cDFT}{RGB}{145,91,175}
\definecolor{cTALR}{RGB}{254,200,100}
\definecolor{cSTM}{RGB}{69,117,180}
\definecolor{cL2}{RGB}{116,173,209}
\definecolor{cWiSE}{RGB}{255,127,0}
\definecolor{cLoRA}{RGB}{77,175,74}
\definecolor{cOurs}{RGB}{228,26,28}

\pgfplotsset{
    scatter plot base/.style={
        width=0.36\textwidth,
        height=0.36\textwidth,
        axis background/.style={fill=plotbg},
        axis lines=left,
        axis line style={gray!50},
        grid=major,
        grid style={white, line width=0.8pt},
        tick style={draw=none},
        xlabel={New Task Accuracy $\uparrow$},
        ylabel={Avg.\ $\Delta$ Old Tasks $\uparrow$},
        xlabel style={font=\small},
        ylabel style={font=\small},
        tick label style={font=\footnotesize},
        title style={font=\small\bfseries},
        scatter/classes={
            a={mark=*,           draw=cNormalSFT, fill=cNormalSFT},
            s={mark=x,           draw=cSmallLR,   fill=cSmallLR, mark size=3pt},
            b={mark=square*,     draw=cFLOW,      fill=cFLOW},
            c={mark=triangle*,   draw=cDFT,       fill=cDFT},
            d={mark=diamond*,    draw=cTALR,      fill=cTALR},
            e={mark=pentagon*,   draw=cSTM,       fill=cSTM},
            f={mark=halfcircle*, draw=cL2,        fill=cL2},
            g={mark=oplus*,      draw=cWiSE,      fill=cWiSE},
            h={mark=otimes*,     draw=cLoRA,      fill=cLoRA},
            i={mark=star,        draw=cOurs,      fill=cOurs, mark size=3.5pt}
        },
    },
}

\begin{figure}[t]
\centering

\makebox[\linewidth][l]{\small\bfseries Qwen3-4B}\\[0.2em]

\begin{tikzpicture}
\begin{axis}[scatter plot base,
    title={Knowledge Acquisition},
    xmin=50, xmax=90, ymin=-20, ymax=2,
]
\addplot[scatter, only marks, scatter src=explicit symbolic] coordinates {
    (83.5, -9.6)  [a]
    (65.8, -1.5)  [s]
    (77.5, -8.0)  [b]
    (55.8, -1.1)  [c]
    (62.5, -2.5)  [d]
    (53.8,  0.6)  [e]
    (83.5, -5.4)  [f]
    (82.5,-15.7)  [g]
    (82.3, -7.4)  [h]
    (83.3,  0.1)  [i]
};
\end{axis}
\end{tikzpicture}\hspace{-0.5em}
\begin{tikzpicture}
\begin{axis}[scatter plot base,
    title={Science},
    ylabel={},
    xmin=45, xmax=70, ymin=-15, ymax=2,
]
\addplot[scatter, only marks, scatter src=explicit symbolic] coordinates {
    (62.5,-10.9)  [a]
    (56.7,  0.6)  [s]
    (58.3, -6.6)  [b]
    (46.7, -2.0)  [c]
    (48.3,  0.1)  [d]
    (49.2,  0.4)  [e]
    (57.5,-11.0)  [f]
    (56.7,-11.0)  [g]
    (51.7, -2.1)  [h]
    (65.0, -1.6)  [i]
};
\end{axis}
\end{tikzpicture}\hspace{-0.5em}
\begin{tikzpicture}
\begin{axis}[scatter plot base,
    title={Language Adaptation},
    ylabel={},
    xlabel={Win-Tie Rate (\%) $\uparrow$},
    xmin=71, xmax=95, ymin=-22, ymax=4,
]
\addplot[scatter, only marks, scatter src=explicit symbolic] coordinates {
    (89.0,-14.3)  [a]
    (77.5, -0.7)  [s]
    (90.0,-12.3)  [b]
    (80.0,  2.4)  [c]
    (77.5, -0.9)  [d]
    (81.5, -1.0)  [e]
    (92.0,-18.6)  [f]
    (85.5,-15.1)  [g]
    (90.5,-13.4)  [h]
    (90.0, -2.4)  [i]
};
\end{axis}
\end{tikzpicture}

\vspace{0.5em}
\makebox[\linewidth][l]{\small\bfseries Llama-3-8B}\\[0.2em]

\begin{tikzpicture}
\begin{axis}[scatter plot base,
    title={Knowledge Acquisition},
    xmin=48, xmax=90, ymin=-25, ymax=2,
]
\addplot[scatter, only marks, scatter src=explicit symbolic] coordinates {
    (83.3, -9.0)  [a]
    (69.5, -1.4)  [s]
    (76.0,-12.2)  [b]
    (58.8, -4.2)  [c]
    (63.3, -4.8)  [d]
    (50.8, -3.4)  [e]
    (82.8, -6.4)  [f]
    (80.8,-20.2)  [g]
    (82.0, -5.3)  [h]
    (84.3, -1.2)  [i]
};
\end{axis}
\end{tikzpicture}\hspace{-0.5em}
\begin{tikzpicture}
\begin{axis}[scatter plot base,
    title={Science},
    ylabel={},
    xtick={50,55,60},
    xmin=48, xmax=62, ymin=-12, ymax=3,
]
\addplot[scatter, only marks, scatter src=explicit symbolic] coordinates {
    (55.8, -9.6)  [a]
    (51.7, -0.1)  [s]
    (50.8,-10.2)  [b]
    (49.2, -9.7)  [c]
    (54.2, -5.1)  [d]
    (55.0, -0.8)  [e]
    (60.8, -9.3)  [f]
    (55.8, -4.5)  [g]
    (53.3, -2.1)  [h]
    (56.7,  1.7)  [i]
};
\end{axis}
\end{tikzpicture}\hspace{-0.5em}
\begin{tikzpicture}
\begin{axis}[scatter plot base,
    title={Language Adaptation},
    ylabel={},
    xlabel={Win-Tie Rate (\%) $\uparrow$},
    xmin=92, xmax=100, ymin=-22, ymax=2,
]
\addplot[scatter, only marks, scatter src=explicit symbolic] coordinates {
    (97.0, -8.5)  [a]
    (94.0, -1.8)  [s]
    (94.5,-17.8)  [b]
    (93.5, -8.6)  [c]
    (93.0, -3.8)  [d]
    (95.0, -4.0)  [e]
    (97.0, -8.7)  [f]
    (96.0, -9.1)  [g]
    (97.0, -6.1)  [h]
    (98.5, -0.5)  [i]
};
\end{axis}
\end{tikzpicture}

\vspace{0.5em}
\begin{tikzpicture}
\begin{axis}[
    hide axis, xmin=0, xmax=1, ymin=0, ymax=1,
    legend style={
        at={(0.5,0.5)}, anchor=center,
        legend columns=5,
        font=\footnotesize,
        draw=none, fill=none,
        /tikz/every even column/.append style={column sep=0.5em},
    },
]
\addlegendimage{mark=*,           only marks, draw=cNormalSFT, fill=cNormalSFT} \addlegendentry{SFT}
\addlegendimage{mark=x,           only marks, draw=cSmallLR,   fill=cSmallLR}   \addlegendentry{SFT (small lr)}
\addlegendimage{mark=square*,     only marks, draw=cFLOW,      fill=cFLOW}      \addlegendentry{FLOW}
\addlegendimage{mark=triangle*,   only marks, draw=cDFT,       fill=cDFT}       \addlegendentry{DFT}
\addlegendimage{mark=diamond*,    only marks, draw=cTALR,      fill=cTALR}      \addlegendentry{TALR}
\addlegendimage{mark=pentagon*,   only marks, draw=cSTM,       fill=cSTM}       \addlegendentry{STM}
\addlegendimage{mark=halfcircle*, only marks, draw=cL2,        fill=cL2}        \addlegendentry{L2 Reg}
\addlegendimage{mark=oplus*,      only marks, draw=cWiSE,      fill=cWiSE}      \addlegendentry{WiSE-FT}
\addlegendimage{mark=otimes*,     only marks, draw=cLoRA,      fill=cLoRA}      \addlegendentry{LoRA}
\addlegendimage{mark=star,        only marks, draw=cOurs,      fill=cOurs, mark size=3.5pt} \addlegendentry{\ourmethod}
\end{axis}
\end{tikzpicture}

\caption{Task accuracy (or win-tie rate) vs.\ average benchmark $\Delta$ across fine-tuning methods on Qwen3-4B (top row) and Llama-3-8B (bottom row). The ideal method appears in the \textbf{top-right} (high task performance, minimal forgetting). \ourmethod\ achieves a Pareto-optimal trade-off across all six settings.}
\label{fig:scatter_preserve}
\end{figure}

\begin{table}[!htbp]
\centering
\caption{
Target-task adaptation and forgetting for KA. Task Acc. reports held-out multiple-choice accuracy on author-profile questions. HellaSwag (HS), WinoGrande (WG), IFEval, and MMLU report absolute performance changes relative to the pretrained model, and Avg. $\Delta$ is their mean. Higher values indicate better preservation of general capabilities.
}
\label{tab:results_preserve}
\resizebox{\textwidth}{!}{
\begin{tabular}{ll|c|cccc|c}
\toprule
\textbf{Model} & \textbf{Method} & \textbf{Task Acc.} $\uparrow$ & \textbf{HS} $\uparrow$ & \textbf{WG} $\uparrow$ & \textbf{IFEval} $\uparrow$ & \textbf{MMLU} $\uparrow$ & \textbf{Avg. $\Delta$} $\uparrow$ \\
\midrule
\multirow{12}{*}{\rotatebox[origin=c]{90}{Qwen3-4B}}
 & Base                                      & $53.5$ & $+0.0$  & $+0.0$  & $+0.0$   & $+0.0$  & $+0.0$  \\
 & SFT                                & $83.5$ & $-7.3$  & $-2.0$  & $-9.4$   & $-19.6$ & $-9.6$  \\
 & SFT (small lr)~\citep{lin2025sft}                          & $65.8$ & $-3.8$  & $-2.2$  & $+3.1$   & $-3.2$  & $-1.5$  \\
 & FLOW~\citep{sanyal2025upweighting}        & $77.5$ & $-6.4$  & $+0.7$  & $-13.7$  & $-12.7$ & $-8.0$  \\
 & DFT~\citep{wu2025generalization}          & $55.8$ & $-2.7$  & $-2.7$  & $+3.0$   & $-1.9$  & $-1.1$  \\
 & TALR~\citep{lin2025sft}                   & $62.5$ & $-1.3$  & $+1.0$  & $-4.5$   & $-5.2$  & $-2.5$  \\
 & STM~\citep{wu2025mitigating}              & $53.8$ & $-1.2$  & $+1.1$  & $+1.4$   & $+0.9$  & $+0.6$  \\
 & L2 Reg~\citep{kirkpatrick2017overcoming}  & $83.5$ & $-5.5$  & $+0.2$  & $-1.9$   & $-14.3$ & $-5.4$  \\
 & WiSE-FT~\citep{wortsman2022robust}        & $82.5$ & $-14.7$ & $-3.2$  & $-37.8$  & $-7.2$  & $-15.7$ \\
 & LoRA~\citep{hu2022lora}                   & $82.3$ & $-4.4$  & $-6.9$  & $-9.1$   & $-9.2$  & $-7.4$  \\
\cmidrule{2-8}
\rowcolor{plotbg!300} & \ourmethod\ & $83.3$ & $-0.6$ & $+1.4$ & $+0.5$ & $-0.8$ & $+0.1$ \\
\midrule
\multirow{12}{*}{\rotatebox[origin=c]{90}{Llama-3-8B}}
 & Base                                      & $52.8$ & $+0.0$  & $+0.0$  & $+0.0$   & $+0.0$  & $+0.0$  \\
 & SFT                                & $83.3$ & $-10.4$ & $-0.3$  & $-21.7$  & $-3.6$  & $-9.0$  \\
 & SFT (small lr)~\citep{lin2025sft}                           & $69.5$ & $-0.7$  & $+0.5$  & $-6.2$   & $+0.9$  & $-1.4$  \\
 & FLOW~\citep{sanyal2025upweighting}        & $76.0$ & $-8.4$  & $+2.5$  & $-36.9$  & $-6.0$  & $-12.2$ \\
 & DFT~\citep{wu2025generalization}          & $58.8$ & $-7.8$  & $+2.4$  & $-9.0$   & $-2.2$  & $-4.2$  \\
 & TALR~\citep{lin2025sft}                   & $63.3$ & $-5.5$  & $+0.6$  & $-12.3$  & $-2.0$  & $-4.8$  \\
 & STM~\citep{wu2025mitigating}              & $50.8$ & $-0.4$  & $+0.0$  & $-12.3$  & $-0.7$  & $-3.4$  \\
 & L2 Reg~\citep{kirkpatrick2017overcoming}  & $82.8$ & $-10.8$ & $+0.8$  & $-11.8$  & $-3.9$  & $-6.4$  \\
 & WiSE-FT~\citep{wortsman2022robust}        & $80.8$ & $-20.4$ & $-3.4$  & $-50.0$  & $-7.0$  & $-20.2$ \\
 & LoRA~\citep{hu2022lora}                   & $82.0$ & $-0.9$  & $-0.6$  & $-15.7$  & $-3.8$  & $-5.3$  \\
\cmidrule{2-8}
\rowcolor{plotbg!300} & \ourmethod\ & $84.3$ & $-4.2$  & $+2.2$  & $+0.0$   & $-2.6$  & $-1.2$  \\
\bottomrule
\end{tabular}
}
\end{table}
\subsection{Experimental Setting}

\paragraph{Tasks.} We evaluate \ourmethod\ on three settings where the pretrained model does not have good performance and a large fraction of task-relevant tokens are hard:
\begin{itemize}
    \item \textit{Language adaptation (LA):} Instruction following in Galician, a low-resource language with limited pretraining coverage. We use Galician Alpaca, a translated version of the Stanford Alpaca instruction-following dataset~\citep{alpaca, chen2024monolingual}.

    \item \textit{Science:} Undergraduate-level scientific reasoning using the Chemistry L-3 subset of SciKnowEval~\citep{feng2024sciknoweval}, a multiple-choice benchmark covering chemistry concepts.

    \item \textit{Knowledge acquisition (KA):} Acquiring novel factual information using TOFU~\citep{maini2024tofu}, a dataset of 200 synthetic author profiles each consisting of 20 question-answer pairs probing biographical facts.
\end{itemize}

\paragraph{Evaluation.}
We evaluate performance on held-out examples from each target task. For LA, we report the win-tie rate of the fine-tuned model against the pretrained model, using an LLM judge~\citep{zheng2023judging}. For Science and KA, we report multiple-choice accuracy. To measure forgetting, we report the change in performance relative to the pretrained model across four general benchmarks: HellaSwag~\citep{zellers2019hellaswag} and WinoGrande~\citep{sakaguchi2021winogrande} for commonsense reasoning, MMLU~\citep{hendrycks2020measuring} for general knowledge and reasoning, and IFEval~\citep{zhou2023instruction} for instruction following.

\paragraph{Baselines.}
We compare against three classes of mitigation baselines: objective-modifying methods that down-weight high-entropy sequences or tokens, including FLOW~\citep{sanyal2025upweighting}, DFT~\citep{wu2025generalization}, TALR~\citep{lin2025sft}, and STM~\citep{wu2025mitigating}; deviation-constraining methods, including L2 regularization~\citep{kirkpatrick2017overcoming}, SFT (small lr)~\citep{lin2025sft}, and WiSE-FT~\citep{wortsman2022robust}; and parameter-efficient fine-tuning via LoRA~\citep{hu2022lora}. All experiments use Qwen3-4B-Instruct~\citep{yang2025qwen3} and Llama-3-8B~\citep{grattafiori2024llama}. For each baseline method, we use a standard warmup-cosine schedule, sweep hyperparameters, and report the checkpoint with the highest validation target-task performance; details are in Appendix~\ref{app:experiments}. We use a maximum gradient norm of $1.0$ for all main experiments. To check whether aggressive gradient clipping can reduce forgetting, Appendix~\ref{app:grad-norm} evaluates smaller clipping thresholds; the results show that choosing a small enough gradient norm to substantially reduce forgetting also causes significant target-task degradation.

\subsection{Results}

Figure~\ref{fig:scatter_preserve} summarizes the trade-off between target-task adaptation and preservation of general capabilities. Across all six settings, standard SFT achieves strong target-task performance but substantially degrades performance on the general benchmarks. Several baselines reduce forgetting, but often by sacrificing adaptation performance: for example, STM, DFT, and TALR preserve old capabilities better in some settings but obtain much lower target-task accuracy. In contrast, \ourmethod\ consistently lies near the upper-right region of the trade-off plot, achieving competitive target-task performance of strong fine-tuning baselines while incurring substantially smaller average benchmark degradation. Summing signed Avg.~$\Delta$ across the six model--task settings, standard SFT incurs $61.9$ total points of degradation, whereas \ourmethod\ incurs only $3.9$, corresponding to a $93\%$ reduction in forgetting.

Table~\ref{tab:results_preserve} shows this pattern in detail on KA. On Qwen3-4B, \ourmethod\ reaches $83.3\%$ task accuracy, comparable to SFT and LoRA, while reducing average forgetting from $-9.6$ and $-7.4$ to $+0.1$. On Llama-3-8B, \ourmethod\ obtains the highest task accuracy ($84.3\%$) and again has the smallest degradation among methods with strong adaptation performance, with Avg.~$\Delta=-1.2$. Detailed results for Science and LA are provided in Tables~\ref{tab:results_science} and~\ref{tab:results_galician} in the appendix, showing a similar pattern. These results show that \ourmethod\ learns the new task without sacrificing general capabilities.

%% file: experiments_hallucination.tex
\section{FINCH Better Preserves Reliability Under Fine-Tuning}
\label{sec:experiments-halluc}
Beyond average benchmark performance, fine-tuning should preserve other aspects of model reliability, including factuality, robustness to hallucination, and calibrated confidence. These properties are central to the trustworthy use of LLMs~\citep{wang2024factuality, huang2025survey, kapoor2024large}, but can degrade after fine-tuning, especially when models are trained on narrow domains or new knowledge. In this section, we evaluate truthfulness using TruthfulQA (Section~\ref{subsec:factualityandtruthfulness}), hallucination detection using HaluEval (Section~\ref{subsec:hallucination}), and verbalized confidence calibration (Section~\ref{subsec:calibration}). We conduct all experiments in this section on Qwen3-4B.

\pgfplotsset{
    scatter plot ka/.style={
        width=1.0\textwidth,
        height=1.0\textwidth,
        axis background/.style={fill=red!5},
        axis lines=left,
        axis line style={gray!50},
        grid=major,
        grid style={white, line width=0.8pt},
        tick style={draw=none},
        xlabel={Task Accuracy $\uparrow$},
        xlabel style={font=\small},
        ylabel style={font=\small},
        tick label style={font=\footnotesize},
        title style={font=\small\bfseries},
        xmin=50, xmax=90,
        xtick={50,60,70,80,90},
        scatter/classes={
            a={mark=*,           draw=cNormalSFT, fill=cNormalSFT},
            b={mark=square*,     draw=cFLOW,      fill=cFLOW},
            c={mark=triangle*,   draw=cDFT,       fill=cDFT},
            d={mark=diamond*,    draw=cTALR,      fill=cTALR},
            e={mark=pentagon*,   draw=cSTM,       fill=cSTM},
            f={mark=halfcircle*, draw=cL2,        fill=cL2},
            g={mark=oplus*,      draw=cWiSE,      fill=cWiSE},
            h={mark=otimes*,     draw=cLoRA,      fill=cLoRA},
            i={mark=star,        draw=cOurs,      fill=cOurs, mark size=3.5pt}
        },
    },
}

\begin{figure}[t]
\centering
\begin{subfigure}[t]{0.32\textwidth}
\centering
\begin{tikzpicture}
\begin{axis}[scatter plot ka,
    ylabel={TruthfulQA $\Delta$ $\uparrow$},
    ymin=-25, ymax=2,
]
\addplot[scatter, only marks, scatter src=explicit symbolic] coordinates {
    (83.5, -13.9)  [a]
    (77.5, -11.7)  [b]
    (55.8,  -4.9)  [c]
    (62.5,  -5.4)  [d]
    (53.8,  -0.7)  [e]
    (83.5, -12.0)  [f]
    (82.5, -19.9)  [g]
    (82.3, -16.4)  [h]
    (83.3,  -2.6)  [i]
};
\end{axis}
\end{tikzpicture}
\caption{TruthfulQA}
\label{fig:scatter_truthful}
\end{subfigure}\hfill
\begin{subfigure}[t]{0.32\textwidth}
\centering
\begin{tikzpicture}
\begin{axis}[scatter plot ka,
    ylabel={HaluEval $\Delta$ $\uparrow$},
    ymin=-15, ymax=8,
]
\addplot[scatter, only marks, scatter src=explicit symbolic] coordinates {
    (83.5, -9.8)  [a]
    (77.5, -8.2)  [b]
    (55.8,  2.7)  [c]
    (62.5,  5.1)  [d]
    (53.8,  3.7)  [e]
    (83.5, -1.2)  [f]
    (82.5, -7.8)  [g]
    (82.3, -4.3)  [h]
    (83.3,  3.3)  [i]
};
\end{axis}
\end{tikzpicture}
\caption{HaluEval}
\label{fig:scatter_halu}
\end{subfigure}\hfill
\begin{subfigure}[t]{0.32\textwidth}
\centering
\begin{tikzpicture}
\begin{axis}[scatter plot ka,
    ylabel={Brier $\Delta$ $\downarrow$},
    ymin=0, ymax=20,
]
\addplot[scatter, only marks, scatter src=explicit symbolic] coordinates {
    (83.5, 14.7)  [a]
    (77.5, 12.8)  [b]
    (55.8,  6.7)  [c]
    (62.5, 12.7)  [d]
    (53.8,  1.9)  [e]
    (83.5, 13.6)  [f]
    (82.5, 17.5)  [g]
    (82.3, 14.9)  [h]
    (83.3,  9.1)  [i]
};
\end{axis}
\end{tikzpicture}
\caption{Brier Score}
\label{fig:scatter_brier}
\end{subfigure}

\vspace{0.5em}
\begin{tikzpicture}
\begin{axis}[
    hide axis, xmin=0, xmax=1, ymin=0, ymax=1,
    legend style={
        at={(0.5,0.5)}, anchor=center,
        legend columns=5,
        font=\footnotesize,
        draw=none, fill=none,
        /tikz/every even column/.append style={column sep=0.5em},
    },
]
\addlegendimage{mark=*,           only marks, draw=cNormalSFT, fill=cNormalSFT} \addlegendentry{SFT}
\addlegendimage{mark=square*,     only marks, draw=cFLOW,      fill=cFLOW}      \addlegendentry{FLOW}
\addlegendimage{mark=triangle*,   only marks, draw=cDFT,       fill=cDFT}       \addlegendentry{DFT}
\addlegendimage{mark=diamond*,    only marks, draw=cTALR,      fill=cTALR}      \addlegendentry{TALR}
\addlegendimage{mark=pentagon*,   only marks, draw=cSTM,       fill=cSTM}       \addlegendentry{STM}
\addlegendimage{mark=halfcircle*, only marks, draw=cL2,        fill=cL2}        \addlegendentry{L2 Reg}
\addlegendimage{mark=oplus*,      only marks, draw=cWiSE,      fill=cWiSE}      \addlegendentry{WiSE-FT}
\addlegendimage{mark=otimes*,     only marks, draw=cLoRA,      fill=cLoRA}      \addlegendentry{LoRA}
\addlegendimage{mark=star,        only marks, draw=cOurs,      fill=cOurs, mark size=3.5pt} \addlegendentry{\ourmethod}
\end{axis}
\end{tikzpicture}

\caption{Truthfulness, hallucination, and calibration vs.\ task accuracy on knowledge acquisition (Qwen3-4B). Higher TruthfulQA/HaluEval $\Delta$ and lower Brier $\Delta$ indicate less degradation. \ourmethod\ achieves competitive task accuracy with substantially less degradation across all three axes.}
\label{fig:scatter_factuality}
\end{figure}
\subsection{Factuality and Truthfulness}
\label{subsec:factualityandtruthfulness}
Fine-tuning often degrades factuality and increases hallucination, especially when adapting models to new knowledge~\citep{gekhman2024does, zucchet2025language}. We therefore evaluate whether \ourmethod\ preserves factuality while still improving target-task performance. We use the multiple-choice version of TruthfulQA~\citep{lin2022truthfulqa, eval-harness}, which measures whether a model prefers truthful answers over common false or misleading alternatives. We report the change in TruthfulQA accuracy relative to the pretrained model, so higher values indicate better preservation of factuality.
\begin{table}[h]
\centering
\caption{
Effect of fine-tuning methods on truthfulness, hallucination and calibration(Qwen3-4B). Brier $\Delta$ is relative to the pretrained model (lower is better); TruthfulQA and HaluEval are relative to the pretrained model (higher is better).
}
\label{tab:hallucination_tofu}
\resizebox{0.9\textwidth}{!}{
\begin{tabular}{ll|c|cc|c}
\toprule
\textbf{Model} & \textbf{Method} & \textbf{Task Acc.} $\uparrow$ & \textbf{TruthfulQA $\Delta$} $\uparrow$ & \textbf{HaluEval $\Delta$} $\uparrow$ & \textbf{Brier $\Delta$} $\downarrow$ \\
\midrule
\multirow{10}{*}{\rotatebox[origin=c]{90}{Qwen3-4B}}
 & Base                                      & $53.5$ & $+0.0$   & $+0.0$   & $+0.0$  \\
 & SFT                                & $83.5$ & $-13.9$  & $-9.8$   & $+14.7$ \\
 & FLOW~\citep{sanyal2025upweighting}        & $77.5$ & $-11.7$  & $-8.2$   & $+12.8$ \\
 & DFT~\citep{wu2025generalization}          & $55.8$ & $-4.9$   & $+2.7$   & $+6.7$  \\
 & TALR~\citep{lin2025sft}                   & $62.5$ & $-5.4$   & $+5.1$   & $+12.7$ \\
 & STM~\citep{wu2025mitigating}              & $53.8$ & $-0.7$   & $+3.7$   & $+1.9$  \\
 & L2 Reg~\citep{kirkpatrick2017overcoming}  & $83.5$ & $-12.0$  & $-1.2$   & $+13.6$ \\
 & WiSE-FT~\citep{wortsman2022robust}        & $82.5$ & $-19.9$  & $-7.8$   & $+17.5$ \\
 & LoRA~\citep{hu2022lora}                   & $82.3$ & $-16.4$  & $-4.3$   & $+14.9$ \\
\cmidrule{2-6}
\rowcolor{red!15} & \ourmethod\              & $83.3$ & $-2.6$   & $+3.3$   & $+9.1$  \\
\bottomrule
\end{tabular}
}
\end{table}

Figure~\ref{fig:scatter_factuality} shows the trade-off between target-task performance and TruthfulQA degradation for KA. The corresponding results for Science and LA are provided in Appendix~\ref{app:experiments} (Figures~\ref{fig:scatter_factuality_science} and~\ref{fig:scatter_factuality_galician}; Tables~\ref{tab:hallucination_science} and~\ref{tab:hallucination_galician}). We report the detailed table for KA in Table~\ref{tab:hallucination_tofu}, since factuality degradation is largest in this setting and prior work suggests that hallucination is especially affected when fine-tuning on new knowledge. Across the three tasks on Qwen3-4B, \ourmethod\ gives a strong trade-off between target-task performance and factuality preservation: it achieves competitive target-task performance while substantially reducing TruthfulQA degradation. In KA, standard SFT reaches $83.5\%$ task accuracy but reduces TruthfulQA by $13.9$ points, whereas \ourmethod\ reaches $83.3\%$ accuracy with only a $2.6$ point drop. This suggests that methods that preserve general reasoning and instruction-following performance may also help preserve factuality.

\subsection{Hallucination Detection}
\label{subsec:hallucination}
LLMs can often recognize hallucinated responses when explicitly asked to compare them against factual alternatives~\citep{li2023halueval}. We ask whether this hallucination-detection ability is preserved after fine-tuning, and whether \ourmethod\ better maintains it while still improving target-task performance. We evaluate using HaluEval, where the model is given a question, a correct response, and a hallucinated response, and must identify which response is hallucinated. We report the change in accuracy relative to the pretrained model, so higher values indicate better preservation of hallucination detection.

Figure~\ref{fig:scatter_halu} reports the HaluEval results; detailed tables for the three tasks are provided in Table~\ref{tab:hallucination_tofu} and Appendix~\ref{app:experiments}. In KA, standard SFT reaches $83.5\%$ task accuracy but reduces HaluEval by $9.8$ points, whereas \ourmethod\ reaches $83.3\%$ accuracy and improves HaluEval by $3.3$ points. Interestingly, unlike the general benchmark results, several baselines also improve HaluEval in KA. This is most visible for the token- or sequence-reweighting methods, such as DFT, TALR, and STM, which suppress high-loss examples and therefore reduce forgetting more directly. However, these methods achieve substantially lower KA accuracy, while \ourmethod\ preserves hallucination detection without sacrificing target-task performance. The same trend appears only partially in Science and LA, suggesting that hallucination-detection changes depend on both the fine-tuning method and the target dataset. Understanding this interaction is an interesting direction for future work.

\subsection{Confidence Calibration}
\label{subsec:calibration}

Calibration measures whether a model's stated confidence corresponds to its correctness: a reliable model should not be highly confident when it is wrong. Prior work has shown that fine-tuning can degrade calibration~\citep{he2023preserving}. We therefore evaluate whether \ourmethod\ preserves calibration after adaptation. Following prior work, we elicit verbalized confidence by prompting the model to report a confidence score from $0$ to $100$~\citep{kadavath2022language, wei2024measuring}. We measure calibration on TruthfulQA using the Brier score between this stated confidence and whether the model's answer is correct. We report the change relative to the pretrained model; lower values indicate better calibration preservation.

Figure~\ref{fig:scatter_brier} reports the calibration results, and Table~\ref{tab:hallucination_tofu} gives the corresponding values. Fine-tuning generally increases the Brier score across methods, meaning that models become less calibrated after adaptation. This degradation is especially large for standard SFT. In KA, for example, standard SFT increases the Brier score by $14.7$ points. \ourmethod\ reduces this increase to $9.1$ points while maintaining competitive target-task accuracy. Some reweighting methods reduce the Brier score increase further, but they also obtain much lower target-task accuracy, giving a worse trade-off. Thus, \ourmethod\ gives a favorable trade-off between target-task performance and calibration preservation. At the same time, the remaining Brier score increase shows that calibration is not fully preserved, leaving substantial room for future improvement.

%% file: conclusion.tex
\section{Conclusion, Limitations, and Future Work}
\label{sec:conclusion}

We introduce \ourmethod, an adaptive learning rate schedule that mitigates catastrophic forgetting during fine-tuning. Our theoretical analysis shows that per-step forgetting is bounded by the learning rate times the square root of the current mini-batch loss, which motivates reducing the learning rate on high-loss batches and relaxing it as the model converges. Empirically, FINCH achieves a 93\% average reduction in forgetting while matching standard SFT on target-task accuracy, and gives a Pareto-optimal trade-off across factuality, hallucination detection, and confidence calibration.

\paragraph{Limitations and Future Work.}
Our results also point to several directions for future work. Our analysis is stated for SGD, while our experiments use AdamW; closing this gap is an interesting direction for future work. Additionally, while FINCH substantially reduces calibration degradation relative to SFT, it does not fully preserve the calibration of the pretrained model. Developing methods that close this remaining calibration gap after fine-tuning is an important open problem. Finally, due to computational constraints, our experiments are limited to models up to 8B parameters, and evaluating FINCH at larger scales is a natural next step.

%% file: appendix.tex
\newpage
\appendix
\section{Proofs}
\label{app:proofs}

\subsection{Auxiliary bounds implied by Assumption~\ref{asm:bounded-network}}

We first record standard consequences of Assumption~\ref{asm:bounded-network}. Since the input domain is bounded, the fine-tuning trajectory remains in a bounded parameter region, and the activation functions have bounded first and second derivatives on the relevant range, the network computation is uniformly controlled along training. In particular, the logits, the logit Jacobian, the score-function gradient, and the log-probability Hessian are uniformly bounded for all contexts and labels, uniformly over the bounded parameter region traversed by the fine-tuning trajectory. Similar boundedness and smoothness assumptions are standard in learning-theoretic analyses of gradient methods~\citep{shalev2014understanding}, and related work derives explicit Lipschitz or derivative bounds for neural networks~\citep{virmaux2018lipschitz, herrera2020local}.

\begin{lemma}
\label{lem:bounded-jacobian}
Under Assumption~\ref{asm:bounded-network}, there exist constants $M,G,H>0$ such that for all parameters $\theta$ in the bounded region traversed during fine-tuning, contexts $x$, and labels $y$,
\[
\|J_\theta(x)\|_{\mathrm{op}} \le M,
\qquad
\|\nabla_\theta \log p_\theta(y\mid x)\|_2 \le G,
\qquad
\|\nabla_\theta^2 \log p_\theta(y\mid x)\|_{\mathrm{op}} \le H,
\]
where $J_\theta(x)$ denotes the Jacobian of the logits with respect to $\theta$.
\end{lemma}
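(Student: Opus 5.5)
The plan is to prove the three bounds in sequence, each one following from the previous. The core is a layer-by-layer induction over the network's forward pass. Write the network as a composition $h_0 = x$, $h_\ell = \sigma(W_\ell h_{\ell-1})$ for $\ell=1,\dots,L$, with logits $z_\theta(x) = W_{L+1} h_L$. Residual connections, normalization and SwiGLU gates are handled the same way, since each is a smooth map whose derivatives are bounded on bounded inputs. Each weight matrix satisfies $\|W_\ell\|_{\mathrm{op}} \le \|\theta\|_2 \le B_\theta$, and $\|h_0\|_2 \le B_x$.

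The first step bounds the activations. I would bound them by induction, $\|h_\ell\|_2 \le |\sigma(0)|\sqrt{d_\ell} + L_1 B_\theta \|h_{\ell-1}\|_2$, which yields a constant $R_\ell$ depending only on $(B_x, B_\theta, L_1, L)$ and the widths. For sequence models the context length is finite, so the same argument applies to every position.

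The second step bounds the Jacobian. Differentiating $h_\ell$ with respect to $\theta$ splits into two terms. One comes from $W_\ell$ directly, with norm at most $L_1 R_{\ell-1}$. The other is $\mathrm{diag}(\sigma')\,W_\ell\,\partial h_{\ell-1}/\partial\theta$, with norm at most $L_1 B_\theta \|\partial h_{\ell-1}/\partial\theta\|$. Unrolling this recursion gives $\|\partial h_L/\partial\theta\|_{\mathrm{op}} \le M'$, and hence $\|J_\theta(x)\|_{\mathrm{op}} \le R_L + B_\theta M' =: M$.

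The third step gives the score bound $G$ from the softmax structure. Since $\log p_\theta(y\mid x) = z_y - \log\sum_k e^{z_k}$, we have $\nabla_\theta \log p_\theta(y\mid x) = J_\theta(x)^\top(e_y - p_\theta(\cdot\mid x))$. Because $\|e_y - p\|_2 \le \sqrt2$, this gives $G = \sqrt2\,M$. For a sequence $y=(y_1,\dots,y_m)$, the log-probability sums over tokens, so $G$ picks up a factor of the maximum length.

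The main obstacle is the Hessian bound $H$. Differentiating the score again gives two terms:
\[
\nabla^2_\theta \log p_\theta = \sum_k (e_y - p)_k \nabla^2_\theta z_k - J^\top\bigl(\mathrm{diag}(p) - pp^\top\bigr) J .
\]
The second term is bounded by $M^2$, because the softmax covariance has operator norm at most $1$. The first term requires a bound on the second derivatives of the logits, $\|\nabla^2_\theta z_k\|_{\mathrm{op}}$. I would get this by differentiating the layer recursion a second time. The result is a sum of three kinds of terms: those involving $\sigma''$, bounded by $L_2$ times squared first-derivative bounds; cross terms between $W_\ell$ and $\partial h_{\ell-1}$, which are bounded by $L_1 M'$; and $W_\ell\,\partial^2 h_{\ell-1}$. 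Induction over layers then gives a uniform constant $K$.

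The delicate point is the first term's dependence on vocabulary size. I would handle it with $\sum_k |(e_y-p)_k| \le 2$, which gives $\|\sum_k (e_y-p)_k\nabla^2 z_k\| \le 2\max_k\|\nabla^2 z_k\| \le 2K$. This yields $H = 2K + M^2$, and all constants are independent of $i$, $x$ and $y$ because the trajectory stays in the ball of radius $B_\theta$.
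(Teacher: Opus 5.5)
Your proposal is correct and follows essentially the same route as the paper's proof. Both arguments run a layerwise induction for bounded activations, bound the logit Jacobian via the chain rule, obtain $G=\sqrt{2}M$ from $\nabla_\theta \log p_\theta(y\mid x)=J_\theta(x)^\top\bigl(e_y-p_\theta(\cdot\mid x)\bigr)$, and bound the Hessian using bounded $\sigma''$ and bounded log-softmax derivatives. The only difference is that you track operator norms explicitly, via $\sum_k (e_y-p)_k\nabla_\theta^2 z_k - J^\top(\mathrm{diag}(p)-pp^\top)J$ and $\sum_k|(e_y-p)_k|\le 2$, whereas the paper bounds Hessian entries and uses finite parameter dimension to get an operator-norm bound; your constants are therefore more explicit.
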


\begin{proof}
We give the argument for a finite-depth differentiable network; the same reasoning applies to transformer architectures under the corresponding bounded-computation assumption, since they are finite computational graphs composed of differentiable operations.

For notational simplicity, consider a layerwise representation
\[
h_0=x,
\qquad
a_\ell = W_\ell h_{\ell-1}+b_\ell,
\qquad
h_\ell=\sigma_\ell(a_\ell),
\qquad
\ell=1,\dots,L,
\]
with final logits $z_\theta(x)=h_L$.
The argument below only uses finiteness of the computation graph and boundedness of the quantities appearing in the chain rule, not the specific feedforward form above.

We first show that all hidden states are uniformly bounded. Since the input domain is bounded, $\|h_0\|_2=\|x\|_2\le B_x$. Assume inductively that $h_{\ell-1}$ is uniformly bounded. Since $\theta$ lies in the bounded parameter region considered in Assumption~\ref{asm:bounded-network}, the weights $W_\ell$ and biases $b_\ell$ are uniformly bounded. Therefore the pre-activation
\[
a_\ell = W_\ell h_{\ell-1}+b_\ell
\]
is uniformly bounded. Since $\sigma_\ell$ has bounded first derivative, it is Lipschitz; because $\sigma_\ell$ is finite at the origin, it maps bounded sets to bounded sets. Hence $h_\ell=\sigma_\ell(a_\ell)$ is uniformly bounded. By induction, every hidden state, and therefore the final logits $z_\theta(x)$, are uniformly bounded.

Next consider the logit Jacobian $J_\theta(x)=\nabla_\theta z_\theta(x)$. Each entry of this Jacobian is obtained by repeated application of the chain rule through the network. Every factor that appears is either an input coordinate, a hidden activation, a parameter, or a first derivative of an activation. All such factors are uniformly bounded by the previous paragraph and Assumption~\ref{asm:bounded-network}. Since the network has finite depth and finite parameter dimension, there exists a constant $M>0$ such that
\[
\|J_\theta(x)\|_{\mathrm{op}} \le M.
\]

For the score-function gradient,
\[
\nabla_\theta \log p_\theta(y\mid x)
=
J_\theta(x)^\top \nabla_z \log p_\theta(y\mid x).
\]
For a softmax output layer,
\[
\nabla_z \log p_\theta(y\mid x)=e_y-p_\theta(\cdot\mid x),
\]
so
\[
\|\nabla_z \log p_\theta(y\mid x)\|_2 \le \sqrt{2}.
\]
Hence
\[
\|\nabla_\theta \log p_\theta(y\mid x)\|_2
\le
\|J_\theta(x)\|_{\mathrm{op}}\,
\|\nabla_z \log p_\theta(y\mid x)\|_2
\le
\sqrt{2}\,M.
\]
Absorbing $\sqrt{2}$ into the constant gives the desired bound with some $G>0$.

Finally, consider the Hessian $\nabla_\theta^2 \log p_\theta(y\mid x)$. Differentiating once more introduces finitely many terms produced by the product and chain rules. These terms involve bounded inputs, bounded hidden activations, bounded parameters, bounded first derivatives of activations, bounded second derivatives of activations, and first- and second-order derivatives of the log-softmax map. Since the logits are uniformly bounded and the log-softmax map is smooth, these log-softmax derivatives are also uniformly bounded on the relevant range. Therefore, since the architecture has finite depth and finite parameter dimension, all entries of the Hessian are uniformly bounded. Since the parameter dimension is finite, this implies a uniform operator-norm bound. Hence
\[
\|\nabla_\theta^2 \log p_\theta(y\mid x)\|_{\mathrm{op}} \le H
\]
for some constant $H>0$.
\end{proof}

\subsection{Proof of Theorem~\ref{thm:stepwise-bound}}

We now prove the per-step forgetting bound for the sequence-level mini-batch loss used during training.
\renewcommand{\thetheorem}{\ref{thm:stepwise-bound}}
\begin{theorem}[Per-step forgetting bound]
Suppose each training example in the mini-batch $\mathcal B_i$ is a sequence
$s=(x_{1:T_s},y_{1:T_s})$, and define its average token-level cross-entropy as
\[
\ell_s(\theta)
:=
\frac{1}{T_s}\sum_{t=1}^{T_s}
\CE\bigl(q_t(\cdot\mid x_{<t})\|p_\theta(\cdot\mid x_{<t})\bigr),
\]
where $q_t(\cdot\mid x_{<t})$ is the target distribution at position $t$ conditioned on the prefix $x_{<t}$. Let
\[
\mathcal{L}_{\mathcal B_i}(\theta)
:=
\frac{1}{|\mathcal B_i|}\sum_{s\in\mathcal B_i}\ell_s(\theta)
\]
denote the average sequence loss on the mini-batch used at training step $i$. Under Assumption~\ref{asm:bounded-network}, there exist constants $C_1,C_2>0$ such that
\[
\Delta\mathcal{L}_{\mathrm{old}}(p_i)
\le
C_1 \eta_i \sqrt{\mathcal{L}_{\mathcal B_i}(\theta_i)}
+
C_2 \eta_i^2 \mathcal{L}_{\mathcal B_i}(\theta_i).
\]
Consequently, for sufficiently small $\eta_i$,
\[
\Delta\mathcal{L}_{\mathrm{old}}(p_i)
=
O\!\left(\eta_i\,\sqrt{\mathcal{L}_{\mathcal B_i}(\theta_i)}\right).
\]
\end{theorem}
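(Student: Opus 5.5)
We will follow the three-step route of the proof sketch in Section~\ref{sec:analysis}, with Lemma~\ref{lem:bounded-jacobian} supplying all uniform constants. Write $g_i := \nabla_\theta \mathcal{L}_{\mathcal B_i}(\theta_i)$, so that $\theta_{i+1}-\theta_i = -\eta_i g_i$. First we reduce to a pointwise statement. For any old-task example $(x,y)$, with $y$ possibly a sequence whose log-probability is a sum or average of token log-probabilities, we have
\[
\Delta\mathcal{L}_{\mathrm{old}}(p_i) = -\mathbb{E}_{q_{\mathrm{old}}}\bigl[\log p_{i+1}(y\mid x)-\log p_i(y\mid x)\bigr].
\]
It therefore suffices to upper bound $-\bigl(\log p_{i+1}(y\mid x)-\log p_i(y\mid x)\bigr)$ uniformly over $(x,y)$.

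Next we apply Taylor's theorem with Lagrange remainder to $f(\theta)=\log p_\theta(y\mid x)$ along the segment from $\theta_i$ to $\theta_{i+1}$:
\[
f(\theta_{i+1})-f(\theta_i) = -\eta_i\langle \nabla f(\theta_i), g_i\rangle + \tfrac12 \eta_i^2\, g_i^\top \nabla^2 f(\tilde\theta)\, g_i .
\]
By Cauchy--Schwarz and Lemma~\ref{lem:bounded-jacobian}, the negative of this quantity is at most $G\eta_i\|g_i\|_2 + \tfrac{H}{2}\eta_i^2\|g_i\|_2^2$. For a sequence-level old-task loss, which is an average of token log-probabilities, the same constants apply by averaging. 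One caveat: $\tilde\theta$ must lie in the bounded region where the Hessian bound holds. We handle this by taking the region in Assumption~\ref{asm:bounded-network} to be convex, for instance the ball of radius $B_\theta$, which contains both endpoints and hence the whole segment.

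The main step is bounding $\|g_i\|_2$ by $\sqrt{\mathcal{L}_{\mathcal B_i}(\theta_i)}$. For a single position $t$ of a sequence $s$, the chain rule through the softmax gives
\[
\nabla_\theta \CE(q_t\|p_\theta) = J_\theta(x_{<t})^\top\bigl(p_\theta(\cdot\mid x_{<t})-q_t(\cdot\mid x_{<t})\bigr).
\]
Its norm is at most $M\|p-q\|_2 \le M\|p-q\|_1$. By Pinsker's inequality, $\|p-q\|_1 \le \sqrt{2\,\KL(q_t\|p)}$. Since the entropy of $q_t$ is nonnegative, $\KL(q_t\|p)\le \CE(q_t\|p)$. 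Together these give $\|\nabla_\theta\CE\|_2\le M\sqrt{2\,\CE(q_t\|p)}$.

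We then average this bound over tokens and over the sequences in the batch, using the triangle inequality and Jensen's inequality for the concave square root:
\[
\frac1{|\mathcal B_i|}\sum_s \frac1{T_s}\sum_t \sqrt{\CE_{s,t}} \le \sqrt{\frac1{|\mathcal B_i|}\sum_s\frac1{T_s}\sum_t \CE_{s,t}} = \sqrt{\mathcal{L}_{\mathcal B_i}(\theta_i)}.
\]
Hence $\|g_i\|_2 \le C_3\sqrt{\mathcal{L}_{\mathcal B_i}(\theta_i)}$ with $C_3=\sqrt2 M$.

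Substituting this into the Taylor bound gives the theorem with $C_1 = GC_3$ and $C_2 = HC_3^2/2$. For the "consequently" clause, when $\eta_i\sqrt{\mathcal{L}_{\mathcal B_i}(\theta_i)}\le 1$ the second term is dominated by the first, which yields $O(\eta_i\sqrt{\mathcal{L}_{\mathcal B_i}(\theta_i)})$.

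I expect the main obstacle to be justifying the uniform constants rather than any of the inequalities. The Hessian bound must hold at the intermediate point $\tilde\theta$, which requires the convexity argument above. The Jacobian bound must also be uniform over all prefixes $x_{<t}$ used in the sequence loss, which is covered by Lemma~\ref{lem:bounded-jacobian} holding for all contexts. The probabilistic inequalities themselves (Pinsker, $\KL\le\CE$, Jensen) are routine.
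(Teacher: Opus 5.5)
Your proposal is correct and follows essentially the same route as the paper's proof. The steps match: a reduction to a pointwise log-ratio, a second-order Taylor expansion controlled by the constants $G,H$ of Lemma~\ref{lem:bounded-jacobian}, and the bound $\|g_i\|_2\le M\sqrt{2\,\mathcal{L}_{\mathcal B_i}(\theta_i)}$ via $J^\top(p-q)$, Pinsker, $\KL\le\CE$, the triangle inequality and Jensen. The only minor deviations are these: you justify the Hessian bound at $\tilde\theta$ by convexity of the ball $\{\|\theta\|_2\le B_\theta\}$, where the paper simply stipulates that its region contains the segments; and for the big-$O$ clause you assume $\eta_i\sqrt{\mathcal{L}_{\mathcal B_i}(\theta_i)}\le 1$, instead of the paper's uniform loss bound $L_{\max}$ derived from bounded logits.
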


\begin{proof}
Let
\[
g_i^{\mathcal B}
:=
\nabla_\theta \mathcal{L}_{\mathcal B_i}(\theta_i)
\]
be the mini-batch gradient, so the SGD update is
\[
\theta_{i+1}-\theta_i = -\eta_i g_i^{\mathcal B}.
\]

We begin from the old-task loss:
\[
\mathcal{L}_{\mathrm{old}}(p_{i+1})-\mathcal{L}_{\mathrm{old}}(p_i)
=
-\mathbb{E}_{(x_{\mathrm{old}},y_{\mathrm{old}})\sim q_{\mathrm{old}}}
\left[
\log\frac{p_{i+1}(y_{\mathrm{old}}\mid x_{\mathrm{old}})}
{p_i(y_{\mathrm{old}}\mid x_{\mathrm{old}})}
\right].
\]
Therefore,
\[
\Delta\mathcal{L}_{\mathrm{old}}(p_i)
\le
\sup_{(x_{\mathrm{old}},y_{\mathrm{old}})}
\left|
\log\frac{p_{i+1}(y_{\mathrm{old}}\mid x_{\mathrm{old}})}
{p_i(y_{\mathrm{old}}\mid x_{\mathrm{old}})}
\right|.
\]
It thus suffices to bound the pointwise log-ratio on the right-hand side.

Fix any $(x_{\mathrm{old}},y_{\mathrm{old}})$ and define
\[
f(\theta)=\log p_\theta(y_{\mathrm{old}}\mid x_{\mathrm{old}}).
\]
By Taylor's theorem, for some point $\tilde\theta$ on the line segment joining $\theta_i$ and $\theta_{i+1}$,
\[
f(\theta_{i+1})-f(\theta_i)
=
\nabla_\theta f(\theta_i)^\top(\theta_{i+1}-\theta_i)
+
\frac{1}{2}
(\theta_{i+1}-\theta_i)^\top
\nabla_\theta^2 f(\tilde\theta)
(\theta_{i+1}-\theta_i).
\]
We take the bounded parameter region in Assumption~\ref{asm:bounded-network} to contain the line segments between consecutive iterates, so Lemma~\ref{lem:bounded-jacobian} also applies at $\tilde\theta$.
Substituting the update rule,
\[
\log\frac{p_{i+1}(y_{\mathrm{old}}\mid x_{\mathrm{old}})}
{p_i(y_{\mathrm{old}}\mid x_{\mathrm{old}})}
=
-\eta_i \nabla_\theta \log p_{\theta_i}(y_{\mathrm{old}}\mid x_{\mathrm{old}})^\top g_i^{\mathcal B}
+
\frac{\eta_i^2}{2}
(g_i^{\mathcal B})^\top
\nabla_\theta^2 \log p_{\tilde\theta}(y_{\mathrm{old}}\mid x_{\mathrm{old}})
g_i^{\mathcal B}.
\]
Taking absolute values and using Cauchy--Schwarz gives
\[
\left|
\log\frac{p_{i+1}(y_{\mathrm{old}}\mid x_{\mathrm{old}})}
{p_i(y_{\mathrm{old}}\mid x_{\mathrm{old}})}
\right|
\le
\eta_i
\|\nabla_\theta \log p_{\theta_i}(y_{\mathrm{old}}\mid x_{\mathrm{old}})\|_2
\|g_i^{\mathcal B}\|_2
+
\frac{\eta_i^2}{2}
\|\nabla_\theta^2 \log p_{\tilde\theta}(y_{\mathrm{old}}\mid x_{\mathrm{old}})\|_{\mathrm{op}}
\|g_i^{\mathcal B}\|_2^2.
\]
By Lemma~\ref{lem:bounded-jacobian}, it remains to bound $\|g_i^{\mathcal B}\|_2$ in terms of the mini-batch loss.
We first bound the gradient of the loss for a single sequence $s=(x_{1:T_s},y_{1:T_s})$. By definition,
\[
\ell_s(\theta)
=
\frac{1}{T_s}\sum_{t=1}^{T_s}
\CE\bigl(q_t(\cdot\mid x_{<t})\|p_\theta(\cdot\mid x_{<t})\bigr),
\]
hence
\[
\nabla_\theta \ell_s(\theta_i)
=
\frac{1}{T_s}\sum_{t=1}^{T_s}
\nabla_\theta
\CE\bigl(q_t(\cdot\mid x_{<t})\|p_{\theta_i}(\cdot\mid x_{<t})\bigr).
\]
For each token position $t$, the gradient of the token-level cross-entropy is
\[
\nabla_\theta
\CE\bigl(q_t(\cdot\mid x_{<t})\|p_{\theta_i}(\cdot\mid x_{<t})\bigr)
=
J_{\theta_i}(x_{<t})^\top
\bigl(p_{\theta_i}(\cdot\mid x_{<t})-q_t(\cdot\mid x_{<t})\bigr),
\]
where $J_{\theta_i}(x_{<t})$ is the Jacobian of the logits at context $x_{<t}$ with respect to $\theta$. Therefore,
\[
\left\|
\nabla_\theta
\CE\bigl(q_t(\cdot\mid x_{<t})\|p_{\theta_i}(\cdot\mid x_{<t})\bigr)
\right\|_2
\le
\|J_{\theta_i}(x_{<t})\|_{\mathrm{op}}
\,
\|p_{\theta_i}(\cdot\mid x_{<t})-q_t(\cdot\mid x_{<t})\|_2.
\]
Using $\|v\|_2\le \|v\|_1$, Pinsker's inequality, and
\[
\KL\bigl(q_t(\cdot\mid x_{<t})\|p_{\theta_i}(\cdot\mid x_{<t})\bigr)
\le
\CE\bigl(q_t(\cdot\mid x_{<t})\|p_{\theta_i}(\cdot\mid x_{<t})\bigr),
\]
we get
\[
\left\|
\nabla_\theta
\CE\bigl(q_t(\cdot\mid x_{<t})\|p_{\theta_i}(\cdot\mid x_{<t})\bigr)
\right\|_2
\le
\|J_{\theta_i}(x_{<t})\|_{\mathrm{op}}
\sqrt{
2\,
\CE\bigl(q_t(\cdot\mid x_{<t})\|p_{\theta_i}(\cdot\mid x_{<t})\bigr)
}.
\]
By Lemma~\ref{lem:bounded-jacobian}, $\|J_{\theta_i}(x_{<t})\|_{\mathrm{op}}\le M$, so
\[
\left\|
\nabla_\theta
\CE\bigl(q_t(\cdot\mid x_{<t})\|p_{\theta_i}(\cdot\mid x_{<t})\bigr)
\right\|_2
\le
M\sqrt{
2\,
\CE\bigl(q_t(\cdot\mid x_{<t})\|p_{\theta_i}(\cdot\mid x_{<t})\bigr)
}.
\]
Now apply the triangle inequality across token positions:
\[
\|\nabla_\theta \ell_s(\theta_i)\|_2
\le
\frac{1}{T_s}\sum_{t=1}^{T_s}
\left\|
\nabla_\theta
\CE\bigl(q_t(\cdot\mid x_{<t})\|p_{\theta_i}(\cdot\mid x_{<t})\bigr)
\right\|_2.
\]
Combining with the previous bound,
\[
\|\nabla_\theta \ell_s(\theta_i)\|_2
\le
\frac{M}{T_s}\sum_{t=1}^{T_s}
\sqrt{
2\,
\CE\bigl(q_t(\cdot\mid x_{<t})\|p_{\theta_i}(\cdot\mid x_{<t})\bigr)
}.
\]
Using Jensen's inequality for the concave function $u\mapsto \sqrt{u}$,
\[
\frac{1}{T_s}\sum_{t=1}^{T_s}
\sqrt{
2\,
\CE\bigl(q_t(\cdot\mid x_{<t})\|p_{\theta_i}(\cdot\mid x_{<t})\bigr)
}
\le
\sqrt{
\frac{2}{T_s}\sum_{t=1}^{T_s}
\CE\bigl(q_t(\cdot\mid x_{<t})\|p_{\theta_i}(\cdot\mid x_{<t})\bigr)
}.
\]
Thus
\[
\|\nabla_\theta \ell_s(\theta_i)\|_2
\le
M\sqrt{2\,\ell_s(\theta_i)}.
\]
We now lift this bound from a single sequence to the whole mini-batch. Since
\[
\mathcal{L}_{\mathcal B_i}(\theta)
=
\frac{1}{|\mathcal B_i|}\sum_{s\in\mathcal B_i}\ell_s(\theta),
\]
its gradient is
\[
g_i^{\mathcal B}
=
\nabla_\theta \mathcal{L}_{\mathcal B_i}(\theta_i)
=
\frac{1}{|\mathcal B_i|}\sum_{s\in\mathcal B_i}\nabla_\theta \ell_s(\theta_i).
\]
Applying the triangle inequality across sequences,
\[
\|g_i^{\mathcal B}\|_2
\le
\frac{1}{|\mathcal B_i|}\sum_{s\in\mathcal B_i}
\|\nabla_\theta \ell_s(\theta_i)\|_2
\le
\frac{M}{|\mathcal B_i|}\sum_{s\in\mathcal B_i}\sqrt{2\,\ell_s(\theta_i)}.
\]
Applying Jensen's inequality once more,
\[
\frac{1}{|\mathcal B_i|}\sum_{s\in\mathcal B_i}\sqrt{2\,\ell_s(\theta_i)}
\le
\sqrt{
\frac{2}{|\mathcal B_i|}\sum_{s\in\mathcal B_i}\ell_s(\theta_i)
}
=
\sqrt{2\,\mathcal{L}_{\mathcal B_i}(\theta_i)}.
\]
Therefore,
\[
\|g_i^{\mathcal B}\|_2
\le
M\sqrt{2\,\mathcal{L}_{\mathcal B_i}(\theta_i)}.
\]

Substituting this bound into the Taylor estimate and using Lemma~\ref{lem:bounded-jacobian} again, we obtain
\[
\left|
\log\frac{p_{i+1}(y_{\mathrm{old}}\mid x_{\mathrm{old}})}
{p_i(y_{\mathrm{old}}\mid x_{\mathrm{old}})}
\right|
\le
C_1\,\eta_i\,\sqrt{\mathcal{L}_{\mathcal B_i}(\theta_i)}
+
C_2\,\eta_i^2\,\mathcal{L}_{\mathcal B_i}(\theta_i),
\]
for constants $C_1,C_2>0$ independent of $i$, $(x_{\mathrm{old}},y_{\mathrm{old}})$, and the mini-batch.

Taking the supremum over $(x_{\mathrm{old}},y_{\mathrm{old}})$ gives
\[
\Delta\mathcal{L}_{\mathrm{old}}(p_i)
\le
C_1\,\eta_i\,\sqrt{\mathcal{L}_{\mathcal B_i}(\theta_i)}
+
C_2\,\eta_i^2\,\mathcal{L}_{\mathcal B_i}(\theta_i).
\]

It remains only to justify the simplified big-$O$ form. Assumption~\ref{asm:bounded-network} implies that the logits are uniformly bounded, hence the softmax probabilities are bounded away from zero. Therefore the token-level cross-entropies, sequence losses, and mini-batch losses are uniformly bounded above: there exists $L_{\max}<\infty$ such that
\[
\mathcal{L}_{\mathcal B_i}(\theta_i)\le L_{\max}
\]
for all $i$. Hence
\[
\eta_i^2\mathcal{L}_{\mathcal B_i}(\theta_i)
=
\eta_i\sqrt{\mathcal{L}_{\mathcal B_i}(\theta_i)}
\cdot
\eta_i\sqrt{\mathcal{L}_{\mathcal B_i}(\theta_i)}
\le
\eta_i\sqrt{\mathcal{L}_{\mathcal B_i}(\theta_i)}
\cdot
\eta_i\sqrt{L_{\max}}.
\]
Thus, for sufficiently small $\eta_i$, the second-order term is absorbed into the first-order term, and
\[
\Delta\mathcal{L}_{\mathrm{old}}(p_i)
=
O\!\left(\eta_i\,\sqrt{\mathcal{L}_{\mathcal B_i}(\theta_i)}\right).
\]
\end{proof}
\renewcommand{\thetheorem}{\arabic{theorem}}

\subsection{Proof of Corollary~\ref{cor:cumulative}}

\renewcommand{\thecorollary}{\ref{cor:cumulative}}
\begin{corollary}
Under the conditions of Theorem~\ref{thm:stepwise-bound}, suppose the learning rates are sufficiently small and
\[
\eta_i = \frac{\kappa}{\sqrt{\mathcal{L}_{\mathcal B_i}(\theta_i)}}.
\]
Then the cumulative forgetting satisfies
\[
\mathcal{L}_{\mathrm{old}}(p_T)-\mathcal{L}_{\mathrm{old}}(p_0)
=
O(T\kappa).
\]
\end{corollary}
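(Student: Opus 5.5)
The plan is to telescope the per-step bound of Theorem~\ref{thm:stepwise-bound} along the trajectory and then substitute the adaptive schedule. First, the cumulative forgetting is written as a telescoping sum,
\[
\mathcal{L}_{\mathrm{old}}(p_T)-\mathcal{L}_{\mathrm{old}}(p_0)=\sum_{i=0}^{T-1}\Delta\mathcal{L}_{\mathrm{old}}(p_i),
\]
which is exactly the identity noted after Definition~\ref{def:forgetting}. It therefore suffices to bound each summand by a quantity of order $\kappa$ with a constant independent of $i$.

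For each step, Theorem~\ref{thm:stepwise-bound} gives
$\Delta\mathcal{L}_{\mathrm{old}}(p_i)\le C_1\eta_i\sqrt{\mathcal{L}_{\mathcal B_i}(\theta_i)}+C_2\eta_i^2\mathcal{L}_{\mathcal B_i}(\theta_i)$.
Substituting $\eta_i=\kappa/\sqrt{\mathcal{L}_{\mathcal B_i}(\theta_i)}$ makes the first term exactly $C_1\kappa$. The second term is then
$C_2\eta_i^2\mathcal{L}_{\mathcal B_i}(\theta_i)=C_2\kappa^2$.
This cancellation is exact, so no loss bound is needed here, and we get $\Delta\mathcal{L}_{\mathrm{old}}(p_i)\le C_1\kappa+C_2\kappa^2$. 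The constants $C_1,C_2$ are the ones from the theorem, which were shown to be independent of $i$, of the old-task example, and of the mini-batch.

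The remaining step is to absorb the $\kappa^2$ term, and this is where the hypothesis that the learning rates are sufficiently small enters. The cleanest reading is that $\kappa$ is bounded by some fixed constant $\kappa_0$, for example $\kappa\le 1$ or $\kappa\le C_1/C_2$. Then $C_2\kappa^2\le C_2\kappa_0\kappa$, so each step contributes at most $(C_1+C_2\kappa_0)\kappa$. Summing over the $T$ steps gives the stated $O(T\kappa)$ bound.

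The main subtlety is that the schedule must remain well defined and consistent with the hypotheses of the theorem. We need $\mathcal{L}_{\mathcal B_i}(\theta_i)>0$, so that $\eta_i$ is finite. We also need the resulting iterates, and the segments between consecutive iterates, to stay inside the bounded region of Assumption~\ref{asm:bounded-network} on which Lemma~\ref{lem:bounded-jacobian} holds. As the loss approaches zero, $\eta_i$ blows up; however, the step length satisfies
$\|\theta_{i+1}-\theta_i\|=\eta_i\|g_i^{\mathcal B}\|\le \eta_i M\sqrt{2\mathcal{L}_{\mathcal B_i}(\theta_i)}=\sqrt2 M\kappa$,
so every step has length $O(\kappa)$ regardless of the loss. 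I would state explicitly that the standing assumption that the trajectory stays in the bounded region covers these iterates. I would also state that "sufficiently small" refers to $\kappa$, and that the $\eta_{\max}$ cap and the $\varepsilon$ of the practical schedule only decrease $\eta_i$, so those cases are also covered.

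Finally, as a sign-agnostic remark, the bound in Theorem~\ref{thm:stepwise-bound} is in fact a bound on the absolute value $|\Delta\mathcal{L}_{\mathrm{old}}(p_i)|$. The cumulative statement therefore also holds in absolute value.
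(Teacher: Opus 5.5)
Your proof is correct and follows the paper's route: telescope the per-step bound of Theorem~\ref{thm:stepwise-bound} and substitute the schedule to get an $O(\kappa)$ bound per step. The only difference is the order of steps. You substitute into the explicit two-term bound, so the quadratic term becomes exactly $C_2\kappa^2$ and you only need $\kappa$ bounded; the paper instead first absorbs $C_2\eta_i^2\mathcal{L}_{\mathcal B_i}(\theta_i)$ into the linear term using $\mathcal{L}_{\mathcal B_i}(\theta_i)\le L_{\max}$ and uniformly small $\eta_i$. Your ordering needs a weaker hypothesis and sidesteps the tension that $\eta_i$ grows as the loss falls under this schedule.
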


\begin{proof}
Since the learning rates are sufficiently small, Theorem~\ref{thm:stepwise-bound} gives
\[
\Delta\mathcal{L}_{\mathrm{old}}(p_i)
=
O\!\left(\eta_i\sqrt{\mathcal{L}_{\mathcal B_i}(\theta_i)}\right).
\]
Substituting
\[
\eta_i = \frac{\kappa}{\sqrt{\mathcal{L}_{\mathcal B_i}(\theta_i)}}
\]
gives
\[
\Delta\mathcal{L}_{\mathrm{old}}(p_i)=O(\kappa)
\]
for each step $i$. Summing over $i=0,\dots,T-1$,
\[
\mathcal{L}_{\mathrm{old}}(p_T)-\mathcal{L}_{\mathrm{old}}(p_0)
=
\sum_{i=0}^{T-1}\Delta\mathcal{L}_{\mathrm{old}}(p_i)
=
O(T\kappa).
\]
\end{proof}
\renewcommand{\thecorollary}{\arabic{corollary}}

\section{Experimental Details}
\label{app:experiments}
\subsection{Tasks}
\label{app:experiments-tasks}

We evaluate \ourmethod\ on three settings where the pretrained model does not have good performance and a large fraction of task-relevant tokens are hard:

\begin{itemize}
    \item \textit{Language adaptation (LA):} Instruction following in Galician, a low-resource language with limited pretraining coverage. We use Galician Alpaca, a translated version of the Stanford Alpaca instruction-following dataset~\citep{alpaca, chen2024monolingual}. We translate instructions and outputs separately using GPT-5.2 with the following prompt:
    \begin{quote}
    "Translate the following text from English to Galician.\\
    Requirements:
    \begin{itemize}
        \item Keep the exact meaning.
        \item Keep formatting and punctuation where possible.
        \item Return only the translated text.
    \end{itemize}
    Text: \{text\}"
    \end{quote}
    We use a 70/5/25 train/validation/test split.

    \item \textit{Science.} Undergraduate-level scientific reasoning using the Chemistry
        L-3 subset of SciKnowEval~\citep{feng2024sciknoweval}, a multiple-choice benchmark
        covering chemistry concepts. Solutions were obtained from Shenfeld et al.~\cite{shenfeld2026self} that used gpt-4o to generate solutions. We use a 70/5/25 train/validation/test split.

    \item \textit{Knowledge acquisition (KA).} Acquiring novel factual information using
    TOFU~\citep{maini2024tofu}, a dataset of 200 synthetic author profiles each consisting
    of 20 question--answer pairs probing biographical facts. We train and validate on a 90/10 split of the
    full set of 4{,}000 question--answer pairs. For evaluation, we use a subset of 400
    paraphrased questions paired with both a correct answer and several perturbed
    (incorrect) answers, which allows us to compute multiple-choice accuracy.

\end{itemize}

\subsection{Hyperparameters}
All models are optimized with AdamW. For all baselines, we use a cosine learning rate schedule with a linear warmup over the first 5\% of training steps, train for up to 5 epochs, and select the checkpoint with the best validation performance on the target task. We clip gradients to a maximum norm of 1.0 for all methods (except for the small grad norm baseline for which we use 0.1 as the maximum norm).

We sweep over learning rates in $\{5\times10^{-6},\ 1\times10^{-5},\ 2\times10^{-5},\ 3\times10^{-5},\ 5\times10^{-5},\ 1\times10^{-4}\}$ and batch sizes in $\{16, 32\}$ for all methods except LoRA, for which we follow Sanyal et al.~\cite{sanyal2025upweighting} and sweep over $\{2\times10^{-4},\ 2\times10^{-3},\ 2\times10^{-2},\ 2\times10^{-1}\}$.

For the method-specific hyperparameters, we follow Sanyal et al.~\cite{sanyal2025upweighting} when applicable. For WiSE-FT~\citep{wortsman2022robust}, we set the interpolation coefficient to $\alpha=0.5$, and for L2 regularization~\citep{kirkpatrick2017overcoming}, we set $\lambda=10^{-3}$. For STM~\citep{wu2025mitigating}, we use threshold $2.5$; larger thresholds improve target-task accuracy but lead to forgetting comparable to standard SFT. For SFT (small lr)~\citep{lin2025sft}, the original paper does not specify a general rule for choosing the small learning rate, so we choose the largest learning rate whose average forgetting is less than $2$ percentage points on the validation benchmarks, giving the strongest target-task performance under a low-forgetting constraint.

All experiments were performed on single H200.
\subsection{Evaluation Metrics}
\label{app:evaluation-metrics}

\paragraph{Average old-task change.}
For each method, Avg.~$\Delta$ is the mean change in performance relative to the pretrained model across HellaSwag, WinoGrande, IFEval, and MMLU:
\[
\mathrm{Avg.}~\Delta
=
\frac{1}{4}
\sum_{b \in \{\mathrm{HS}, \mathrm{WG}, \mathrm{IFEval}, \mathrm{MMLU}\}}
\left(\mathrm{Acc}_{b}^{\mathrm{ft}}-\mathrm{Acc}_{b}^{\mathrm{base}}\right).
\]
Higher values indicate better preservation of general capabilities.

\paragraph{Brier score.}
For calibration, we compute the Brier score on TruthfulQA using the model's verbalized confidence:
\[
\mathrm{Brier}
=
\frac{1}{N}
\sum_{i=1}^{N}
(\hat p_i - y_i)^2,
\]
where $\hat p_i \in [0,1]$ is the model's reported confidence in its answer and $y_i \in \{0,1\}$ indicates whether the answer is correct. We report $\Delta$ Brier relative to the pretrained model, so lower values indicate better calibration preservation.
\subsection{Evaluation Prompts}
\label{app:evaluation-prompts}

\paragraph{Galician win-tie judge prompt.}
For low-resource language adaptation, we evaluate Galician language quality using pairwise head-to-head judging. For each example, we randomly swap the order of the two model responses to reduce positional bias. The judge is instructed to compare only the Galician language quality of the two responses, ignoring factual correctness, helpfulness, and content quality. The exact judge instruction is:

\begin{tcolorbox}[
  colback=gray!5,
  colframe=gray!60,
  title={Galician Language Quality Judge Prompt},
  fonttitle=\bfseries,
  breakable
]
\small
Please act as an impartial judge and evaluate the Galician language quality of the responses provided by two AI assistants to the user prompt below. You will be given assistant A's answer and assistant B's answer.

Evaluate each response on the following criteria:
\begin{enumerate}
    \item Is the response written in Galician (not Spanish, Portuguese, English, or other languages)?
    \item How natural and fluent is the Galician? Does it sound like a native speaker, or does it have Spanish/Portuguese interference?
    \item How consistent is the Galician throughout the response---does it code-switch mid-response?
\end{enumerate}

After your brief explanation, you must output only one of the following choices as your final verdict with a label:
\begin{enumerate}
    \item Assistant A is significantly better: \texttt{[[A>{}>B]]}
    \item Assistant A is slightly better: \texttt{[[A>B]]}
    \item Tie, relatively the same: \texttt{[[A=B]]}
    \item Assistant B is slightly better: \texttt{[[B>A]]}
    \item Assistant B is significantly better: \texttt{[[B>{}>A]]}
\end{enumerate}

Example output: ``My final verdict is tie: \texttt{[[A=B]]}''.
\end{tcolorbox}

\paragraph{TruthfulQA confidence prompt.}
For calibration, we evaluate TruthfulQA with verbalized confidence. The model is asked to provide both an answer and a confidence score, which is then used to compute the Brier score. The exact confidence prompt template is:

\begin{tcolorbox}[
  colback=gray!5,
  colframe=gray!60,
  title={TruthfulQA Confidence Prompt},
  fonttitle=\bfseries,
  breakable
]
\small
\begin{verbatim}
Answer this multiple-choice question.
Return only one letter: A, B, C, or D, and a confidence score between 0

Question: {question}

Options:
A. {choice_a}
B. {choice_b}
C. {choice_c}
D. {choice_d}

Answer:
\end{verbatim}
\end{tcolorbox}
\subsection{What if we have a small maximum gradient norm?}
\label{app:grad-norm}
We also test whether forgetting can be mitigated by simply reducing the maximum gradient norm during fine-tuning. Table~\ref{tab:small_grad_norm} reports this control on knowledge acquisition.
\begin{table}[h]
\centering
\caption{
Effect of reducing the maximum gradient norm on knowledge acquisition. Task Acc. reports held-out multiple-choice accuracy on author-profile questions, and Avg. $\Delta$ is the average change on old-task benchmarks.
}
\label{tab:small_grad_norm}
\resizebox{0.65\textwidth}{!}{
\begin{tabular}{ll|cc}
\toprule
\textbf{Model} & \textbf{Max grad norm} & \textbf{Task Acc.} $\uparrow$ & \textbf{Avg. $\Delta$} $\uparrow$ \\
\midrule
\multirow{5}{*}{\rotatebox[origin=c]{90}{Qwen3-4B}}
 & $1$      & $83.5$ & $-9.6$ \\
 & $0.1$    & $82.3$ & $-8.4$ \\
 & $0.01$   & $81.8$ & $-8.3$ \\
 & $0.001$  & $71.8$ & $-7.1$ \\
 & $0.0001$ & $58.5$ & $-4.6$ \\
\midrule
\multirow{5}{*}{\rotatebox[origin=c]{90}{Llama-3-8B}}
 & $1$      & $83.3$ & $-9.0$ \\
 & $0.1$    & $83.0$ & $-10.1$ \\
 & $0.01$   & $82.5$ & $-8.5$ \\
 & $0.001$  & $78.0$ & $-3.6$ \\
 & $0.0001$ & $62.8$ & $-0.4$ \\
\bottomrule
\end{tabular}
}
\end{table}
Reducing the clipping threshold gives only a weak trade-off. Moderate clipping thresholds, such as $0.1$ or $0.01$, preserve target-task accuracy but barely improve forgetting; stronger clipping improves Avg. $\Delta$ mainly by substantially hurting target-task accuracy. Thus, small gradient clipping does not help.

\subsection{Additional Results}

\subsubsection{Additional Results for Section~\ref{sec:experiments-preserves}}
Table~\ref{tab:results_science} and Table~\ref{tab:results_galician} provide the results for Science and LA respectively.
\begin{table}[h]
\centering
\caption{
Target-task adaptation and forgetting on Science QA. Task Acc. reports held-out multiple-choice accuracy on science questions. HellaSwag (HS), WinoGrande (WG), IFEval, and MMLU report absolute performance changes relative to the pretrained model, and Avg. $\Delta$ is their mean. Higher values indicate better preservation of general capabilities.
}
\label{tab:results_science}
\resizebox{\textwidth}{!}{
\begin{tabular}{ll|c|cccc|c}
\toprule
\textbf{Model} & \textbf{Method} & \textbf{Task Acc.} $\uparrow$ & \textbf{HS} $\uparrow$ & \textbf{WG} $\uparrow$ & \textbf{IFEval} $\uparrow$ & \textbf{MMLU} $\uparrow$ & \textbf{Avg. $\Delta$} $\uparrow$ \\
\midrule
\multirow{11}{*}{\rotatebox[origin=c]{90}{Qwen3-4B}}
 & Base                                      & $29.2$ & $+0.0$  & $+0.0$  & $+0.0$   & $+0.0$  & $+0.0$  \\
 & SFT                                & $62.5$ & $-11.2$ & $+2.5$  & $-17.6$  & $-17.3$ & $-10.9$ \\
 & SFT (small lr)                            & $56.7$ & $-5.3$  & $-0.4$  & $+4.9$   & $+3.1$  & $+0.6$  \\
 & FLOW~\citep{sanyal2025upweighting}        & $58.3$ & $-7.0$  & $-3.0$  & $-6.4$   & $-10.0$ & $-6.6$  \\
 & DFT~\citep{wu2025generalization}          & $46.7$ & $-3.4$  & $-0.3$  & $-5.7$   & $+1.4$  & $-2.0$  \\
 & TALR~\citep{lin2025sft}                   & $48.3$ & $-0.8$  & $-0.3$  & $-2.5$   & $+3.8$  & $+0.1$  \\
 & STM~\citep{wu2025mitigating}              & $49.2$ & $-2.4$  & $+2.9$  & $-1.7$   & $+2.6$  & $+0.4$  \\
 & L2 Reg~\citep{kirkpatrick2017overcoming}  & $57.5$ & $-17.7$ & $-0.9$  & $-23.9$  & $-1.3$  & $-11.0$ \\
 & WiSE-FT~\citep{wortsman2022robust}        & $56.7$ & $-11.9$ & $+0.5$  & $-23.7$  & $-8.9$  & $-11.0$ \\
 & LoRA~\citep{hu2022lora}                   & $51.7$ & $-7.5$  & $+6.7$  & $-2.1$   & $-5.6$  & $-2.1$  \\
\cmidrule{2-8}
 & \ourmethod                                & $65.0$ & $-2.9$  & $+0.3$  & $-4.4$   & $+0.6$  & $-1.6$  \\
\midrule
\multirow{11}{*}{\rotatebox[origin=c]{90}{Llama-3-8B}}
 & Base                                      & $38.3$ & $+0.0$  & $+0.0$  & $+0.0$   & $+0.0$  & $+0.0$  \\
 & SFT                                & $55.8$ & $-10.8$ & $+0.9$  & $-23.3$  & $-5.2$  & $-9.6$  \\
 & SFT (small lr)                            & $51.7$ & $+1.7$  & $+1.9$  & $-5.2$   & $+1.1$  & $-0.1$  \\
 & FLOW~\citep{sanyal2025upweighting}        & $50.8$ & $-13.5$ & $+4.7$  & $-22.8$  & $-9.1$  & $-10.2$ \\
 & DFT~\citep{wu2025generalization}          & $49.2$ & $-8.6$  & $-0.2$  & $-18.9$  & $-11.0$ & $-9.7$  \\
 & TALR~\citep{lin2025sft}                   & $54.2$ & $-7.9$  & $-0.3$  & $-7.4$   & $-4.9$  & $-5.1$  \\
 & STM~\citep{wu2025mitigating}              & $55.0$ & $-5.3$  & $+6.9$  & $-4.0$   & $-0.8$  & $-0.8$  \\
 & L2 Reg~\citep{kirkpatrick2017overcoming}  & $60.8$ & $-14.6$ & $+1.8$  & $-20.6$  & $-3.6$  & $-9.3$  \\
 & WiSE-FT~\citep{wortsman2022robust}        & $55.8$ & $-7.7$  & $-0.9$  & $-8.1$   & $-1.3$  & $-4.5$  \\
 & LoRA~\citep{hu2022lora}                   & $53.3$ & $-5.8$  & $+6.6$  & $-2.9$   & $-6.3$  & $-2.1$  \\
\cmidrule{2-8}
 & \ourmethod                                & $56.7$ & $+1.2$  & $+2.3$  & $+1.6$   & $+1.6$  & $+1.7$  \\
\bottomrule
\end{tabular}
}
\end{table}

\begin{table}[h]
\centering
\caption{
Target-task adaptation and forgetting on Galician. Win-Tie reports the tie-adjusted head-to-head rate against the pretrained model. HellaSwag (HS), WinoGrande (WG), IFEval, and MMLU report absolute performance changes relative to the pretrained model, and Avg. $\Delta$ is their mean. Higher values indicate better preservation of general capabilities.
}
\label{tab:results_galician}
\resizebox{\textwidth}{!}{
\begin{tabular}{ll|c|cccc|c}
\toprule
\textbf{Model} & \textbf{Method} & \textbf{Win-Tie} $\uparrow$ & \textbf{HS} $\uparrow$ & \textbf{WG} $\uparrow$ & \textbf{IFEval} $\uparrow$ & \textbf{MMLU} $\uparrow$ & \textbf{Avg. $\Delta$} $\uparrow$ \\
\midrule
\multirow{11}{*}{\rotatebox[origin=c]{90}{Qwen3-4B}}
 & Base                                      & $50.0\%$  & $+0.0$  & $+0.0$  & $+0.0$   & $+0.0$  & $+0.0$  \\
 & SFT                                & $89.0\%$  & $-10.0$ & $-11.6$ & $-25.1$  & $-10.4$ & $-14.3$ \\
 & SFT (small lr)                            & $77.5\%$  & $+0.1$  & $-2.4$  & $-0.1$   & $-0.4$  & $-0.7$  \\
 & FLOW~\citep{sanyal2025upweighting}        & $90.0\%$  & $-15.5$ & $-8.7$  & $-15.6$  & $-9.4$  & $-12.3$ \\
 & DFT~\citep{wu2025generalization}          & $80.0\%$  & $+2.6$  & $-0.6$  & $+5.2$   & $+2.3$  & $+2.4$  \\
 & TALR~\citep{lin2025sft}                   & $77.5\%$  & $+0.3$  & $+0.4$  & $-3.5$   & $-0.8$  & $-0.9$  \\
 & STM~\citep{wu2025mitigating}              & $81.5\%$  & $-0.8$  & $+0.2$  & $-4.1$   & $+0.8$  & $-1.0$  \\
 & L2 Reg~\citep{kirkpatrick2017overcoming}  & $92.0\%$  & $-12.4$ & $-11.7$ & $-39.2$  & $-11.2$ & $-18.6$ \\
 & WiSE-FT~\citep{wortsman2022robust}        & $85.5\%$  & $-16.3$ & $-9.6$  & $-26.6$  & $-7.8$  & $-15.1$ \\
 & LoRA~\citep{hu2022lora}                   & $90.5\%$  & $-10.8$ & $+5.1$  & $-45.6$  & $-2.1$  & $-13.4$ \\
\cmidrule{2-8}
 & \ourmethod                                & $90.0\%$  & $-1.6$  & $-2.7$  & $+0.0$   & $-5.1$  & $-2.4$  \\
\midrule
\multirow{11}{*}{\rotatebox[origin=c]{90}{Llama-3-8B}}
 & Base                                      & $50.0\%$  & $+0.0$  & $+0.0$  & $+0.0$   & $+0.0$  & $+0.0$  \\
 & SFT                                & $97.0\%$  & $-12.2$ & $-0.8$  & $-16.7$  & $-4.2$  & $-8.5$  \\
 & SFT (small lr)                            & $94.0\%$  & $-3.6$  & $+0.2$  & $-4.1$   & $+0.4$  & $-1.8$  \\
 & FLOW~\citep{sanyal2025upweighting}        & $94.5\%$  & $-31.8$ & $-5.2$  & $-20.8$  & $-13.2$ & $-17.8$ \\
 & DFT~\citep{wu2025generalization}          & $93.5\%$  & $-5.1$  & $-3.4$  & $-20.8$  & $-5.0$  & $-8.6$  \\
 & TALR~\citep{lin2025sft}                   & $93.0\%$  & $-6.4$  & $-0.4$  & $-8.3$   & $+0.0$  & $-3.8$  \\
 & STM~\citep{wu2025mitigating}              & $95.0\%$  & $-4.2$  & $-1.4$  & $-8.3$   & $-2.0$  & $-4.0$  \\
 & L2 Reg~\citep{kirkpatrick2017overcoming}  & $97.0\%$  & $-12.2$ & $-1.8$  & $-16.7$  & $-4.0$  & $-8.7$  \\
 & WiSE-FT~\citep{wortsman2022robust}        & $96.0\%$  & $-12.6$ & $-2.0$  & $-16.7$  & $-5.0$  & $-9.1$  \\
 & LoRA~\citep{hu2022lora}                   & $97.0\%$  & $-4.6$  & $-2.2$  & $-16.7$  & $-1.0$  & $-6.1$  \\
\cmidrule{2-8}
 & \ourmethod                                & $98.5\%$  & $-1.2$  & $-0.6$  & $+0.0$   & $+0.0$  & $-0.5$  \\
\bottomrule
\end{tabular}
}
\end{table}
\paragraph{Computing the average forgetting reduction.}
We compute the reported forgetting reduction by aggregating the signed Avg.~$\Delta$ values across the six model--task settings in Figure~\ref{fig:scatter_preserve}. Let $\Delta_{m,t}$ denote Avg.~$\Delta$ for method $m$ on task setting $t$. Since more negative values indicate greater degradation, we define the total degradation as the magnitude of the signed sum:
\[
\mathrm{Deg}(m)
=
-\sum_{t=1}^{6} \Delta_{m,t}.
\]
For standard SFT, this gives
\[
\mathrm{Deg}(\mathrm{SFT}) = 61.9,
\]
whereas for \ourmethod,
\[
\mathrm{Deg}(\ourmethod) = 3.9.
\]
Thus, the relative reduction in forgetting is
\[
1 - \frac{\mathrm{Deg}(\ourmethod)}{\mathrm{Deg}(\mathrm{SFT})}
=
1 - \frac{3.9}{61.9}
\approx 93\%.
\]
\paragraph{IFEval drops.}
We observe that many fine-tuning baselines incur relatively large drops on IFEval. One possible reason is that IFEval differs from the other old-task benchmarks: unlike HellaSwag, WinoGrande, and MMLU, which are multiple-choice evaluations, IFEval requires long-form instruction following and is therefore sensitive to changes in generation behavior. In particular, degradation may shorten model responses and reduce compliance with formatting or constraint-following requirements. For example, on Qwen3-4B after KA fine-tuning with standard SFT, the selected epoch produces much shorter IFEval outputs than the pretrained model: average output length drops from $1720.6$ to $572.4$ characters, and average completion length drops from $1066.4$ to $421.8$ tokens. This suggests that part of the IFEval degradation may reflect degraded long-form instruction following rather than only loss of benchmark knowledge.
\subsubsection{Additional results for Section~\ref{sec:experiments-halluc}}

Table~\ref{tab:hallucination_science}, Figure~\ref{fig:scatter_factuality_science} and Table~\ref{tab:hallucination_galician}, Figure~\ref{fig:scatter_factuality_galician} provide the results for Science and LA respectively.

\begin{figure}[t]
\centering
\pgfplotsset{
    scatter plot science/.style={
        width=1.00\textwidth,
        height=1.00\textwidth,
        axis background/.style={fill=red!5},
        axis lines=left,
        axis line style={gray!50},
        grid=major,
        grid style={white, line width=0.8pt},
        tick style={draw=none},
        xlabel={Task Accuracy $\uparrow$},
        xlabel style={font=\small},
        ylabel style={font=\small},
        tick label style={font=\footnotesize},
        title style={font=\small\bfseries},
        xmin=38, xmax=70,
        xtick={40,45,50,55,60,65,70},
        scatter/classes={
            a={mark=*,           draw=cNormalSFT, fill=cNormalSFT},
            b={mark=square*,     draw=cFLOW,      fill=cFLOW},
            c={mark=triangle*,   draw=cDFT,       fill=cDFT},
            d={mark=diamond*,    draw=cTALR,      fill=cTALR},
            e={mark=pentagon*,   draw=cSTM,       fill=cSTM},
            f={mark=halfcircle*, draw=cL2,        fill=cL2},
            g={mark=oplus*,      draw=cWiSE,      fill=cWiSE},
            h={mark=otimes*,     draw=cLoRA,      fill=cLoRA},
            i={mark=star,        draw=cOurs,      fill=cOurs, mark size=3.5pt}
        },
    },
}
\begin{subfigure}[t]{0.32\textwidth}
\centering
\begin{tikzpicture}
\begin{axis}[scatter plot science,
    ylabel={TruthfulQA $\Delta$ $\uparrow$},
    ymin=-12, ymax=5,
]
\addplot[scatter, only marks, scatter src=explicit symbolic] coordinates {
    (62.5,  -5.9)  [a]
    (58.3,  -3.2)  [b]
    (46.7,   2.2)  [c]
    (48.3,   1.5)  [d]
    (49.2,  -9.98) [e]
    (57.5,  -6.3)  [f]
    (56.7,  -2.6)  [g]
    (51.7,  -0.6)  [h]
    (65.0,   0.2)  [i]
};
\end{axis}
\end{tikzpicture}
\caption{TruthfulQA}
\end{subfigure}\hfill
\begin{subfigure}[t]{0.32\textwidth}
\centering
\begin{tikzpicture}
\begin{axis}[scatter plot science,
    ylabel={HaluEval $\Delta$ $\uparrow$},
    ymin=-45, ymax=5,
]
\addplot[scatter, only marks, scatter src=explicit symbolic] coordinates {
    (62.5, -15.3)  [a]
    (58.3, -15.5)  [b]
    (46.7,  -7.6)  [c]
    (48.3,   0.4)  [d]
    (49.2,  -0.6)  [e]
    (57.5, -39.9)  [f]
    (56.7, -14.5)  [g]
    (51.7,  -1.6)  [h]
    (65.0,  -0.6)  [i]
};
\end{axis}
\end{tikzpicture}
\caption{HaluEval}
\end{subfigure}\hfill
\begin{subfigure}[t]{0.32\textwidth}
\centering
\begin{tikzpicture}
\begin{axis}[scatter plot science,
    ylabel={Brier $\Delta$ $\downarrow$},
    ymin=-4, ymax=8,
]
\addplot[scatter, only marks, scatter src=explicit symbolic] coordinates {
    (62.5,  1.4)  [a]
    (58.3,  -1.1)  [b]
    (46.7,  6.5)  [c]
    (48.3,  2.2)  [d]
    (49.2,  0.1)  [e]
    (57.5,  6.7)  [f]
    (56.7,  1.2)  [g]
    (51.7, 5.2)  [h]
    (65.0,  2.6)  [i]
};
\end{axis}
\end{tikzpicture}
\caption{Brier Score}
\end{subfigure}

\vspace{0.5em}
\begin{tikzpicture}
\begin{axis}[hide axis, xmin=0, xmax=1, ymin=0, ymax=1,
    legend style={at={(0.5,0.5)}, anchor=center, legend columns=5,
    font=\footnotesize, draw=none, fill=none,
    /tikz/every even column/.append style={column sep=0.5em}}]
\addlegendimage{mark=*,           only marks, draw=cNormalSFT, fill=cNormalSFT} \addlegendentry{SFT}
\addlegendimage{mark=square*,     only marks, draw=cFLOW,      fill=cFLOW}      \addlegendentry{FLOW}
\addlegendimage{mark=triangle*,   only marks, draw=cDFT,       fill=cDFT}       \addlegendentry{DFT}
\addlegendimage{mark=diamond*,    only marks, draw=cTALR,      fill=cTALR}      \addlegendentry{TALR}
\addlegendimage{mark=pentagon*,   only marks, draw=cSTM,       fill=cSTM}       \addlegendentry{STM}
\addlegendimage{mark=halfcircle*, only marks, draw=cL2,        fill=cL2}        \addlegendentry{L2 Reg}
\addlegendimage{mark=oplus*,      only marks, draw=cWiSE,      fill=cWiSE}      \addlegendentry{WiSE-FT}
\addlegendimage{mark=otimes*,     only marks, draw=cLoRA,      fill=cLoRA}      \addlegendentry{LoRA}
\addlegendimage{mark=star,        only marks, draw=cOurs,      fill=cOurs, mark size=3.5pt} \addlegendentry{\ourmethod}
\end{axis}
\end{tikzpicture}
\caption{Truthfulness, hallucination, and calibration vs.\ task accuracy on Science (Qwen3-4B).}
\label{fig:scatter_factuality_science}
\end{figure}

\begin{figure}[t]
\centering
\pgfplotsset{
    scatter plot gal/.style={
        width=1.00\textwidth,
        height=1.00\textwidth,
        axis background/.style={fill=red!5},
        axis lines=left,
        axis line style={gray!50},
        grid=major,
        grid style={white, line width=0.8pt},
        tick style={draw=none},
        xlabel={Win-Tie Rate (\%) $\uparrow$},
        xlabel style={font=\small},
        ylabel style={font=\small},
        tick label style={font=\footnotesize},
        title style={font=\small\bfseries},
        xmin=75, xmax=95,
        xtick={75,80,85,90,95},
        scatter/classes={
            a={mark=*,           draw=cNormalSFT, fill=cNormalSFT},
            b={mark=square*,     draw=cFLOW,      fill=cFLOW},
            c={mark=triangle*,   draw=cDFT,       fill=cDFT},
            d={mark=diamond*,    draw=cTALR,      fill=cTALR},
            e={mark=pentagon*,   draw=cSTM,       fill=cSTM},
            f={mark=halfcircle*, draw=cL2,        fill=cL2},
            g={mark=oplus*,      draw=cWiSE,      fill=cWiSE},
            h={mark=otimes*,     draw=cLoRA,      fill=cLoRA},
            i={mark=star,        draw=cOurs,      fill=cOurs, mark size=3.5pt}
        },
    },
}
\begin{subfigure}[t]{0.32\textwidth}
\centering
\begin{tikzpicture}
\begin{axis}[scatter plot gal,
    ylabel={TruthfulQA $\Delta$ $\uparrow$},
    ymin=-15, ymax=5,
]
\addplot[scatter, only marks, scatter src=explicit symbolic] coordinates {
    (89.0,  -7.8)  [a]
    (90.0,  -7.8)  [b]
    (80.0,   0.0)  [c]
    (77.5,  -1.6)  [d]
    (81.5, -12.0)  [e]
    (92.0,  -8.3)  [f]
    (85.5,  -9.4)  [g]
    (90.5, -10.7)  [h]
    (90.0,  -4.0)  [i]
};
\end{axis}
\end{tikzpicture}
\caption{TruthfulQA}
\end{subfigure}\hfill
\begin{subfigure}[t]{0.32\textwidth}
\centering
\begin{tikzpicture}
\begin{axis}[scatter plot gal,
    ylabel={HaluEval $\Delta$ $\uparrow$},
    ymin=-25, ymax=15,
]
\addplot[scatter, only marks, scatter src=explicit symbolic] coordinates {
    (89.0, -18.8)  [a]
    (90.0, -18.8)  [b]
    (80.0,   0.6)  [c]
    (77.5,  11.2)  [d]
    (81.5, -21.3)  [e]
    (92.0, -18.8)  [f]
    (85.5,  -2.5)  [g]
    (90.5,  -8.2)  [h]
    (90.0,   6.7)  [i]
};
\end{axis}
\end{tikzpicture}
\caption{HaluEval}
\end{subfigure}\hfill
\begin{subfigure}[t]{0.32\textwidth}
\centering
\begin{tikzpicture}
\begin{axis}[scatter plot gal,
    ylabel={Brier $\Delta$ $\downarrow$},
    ymin=0, ymax=15,
]
\addplot[scatter, only marks, scatter src=explicit symbolic] coordinates {
    (89.0,  6.2)  [a]
    (90.0,  6.2)  [b]
    (80.0,  1.6)  [c]
    (77.5,  3.7)  [d]
    (81.5, 12.3)  [e]
    (92.0,  5.7)  [f]
    (85.5,  6.7)  [g]
    (90.5, 12.7)  [h]
    (90.0,  3.3)  [i]
};
\end{axis}
\end{tikzpicture}
\caption{Brier Score}
\end{subfigure}

\vspace{0.5em}
\begin{tikzpicture}
\begin{axis}[hide axis, xmin=0, xmax=1, ymin=0, ymax=1,
    legend style={at={(0.5,0.5)}, anchor=center, legend columns=5,
    font=\footnotesize, draw=none, fill=none,
    /tikz/every even column/.append style={column sep=0.5em}}]
\addlegendimage{mark=*,           only marks, draw=cNormalSFT, fill=cNormalSFT} \addlegendentry{SFT}
\addlegendimage{mark=square*,     only marks, draw=cFLOW,      fill=cFLOW}      \addlegendentry{FLOW}
\addlegendimage{mark=triangle*,   only marks, draw=cDFT,       fill=cDFT}       \addlegendentry{DFT}
\addlegendimage{mark=diamond*,    only marks, draw=cTALR,      fill=cTALR}      \addlegendentry{TALR}
\addlegendimage{mark=pentagon*,   only marks, draw=cSTM,       fill=cSTM}       \addlegendentry{STM}
\addlegendimage{mark=halfcircle*, only marks, draw=cL2,        fill=cL2}        \addlegendentry{L2 Reg}
\addlegendimage{mark=oplus*,      only marks, draw=cWiSE,      fill=cWiSE}      \addlegendentry{WiSE-FT}
\addlegendimage{mark=otimes*,     only marks, draw=cLoRA,      fill=cLoRA}      \addlegendentry{LoRA}
\addlegendimage{mark=star,        only marks, draw=cOurs,      fill=cOurs, mark size=3.5pt} \addlegendentry{\ourmethod}
\end{axis}
\end{tikzpicture}
\caption{Truthfulness, hallucination, and calibration vs.\ win-tie rate on Galician (Qwen3-4B).}
\label{fig:scatter_factuality_galician}
\end{figure}
\begin{table}[h]
\centering
\caption{
Reliability analysis on Science QA. HaluEval and TruthfulQA columns show accuracy deltas relative to the pretrained model. Brier $\Delta$ is relative to the pretrained model (lower is better); TruthfulQA and HaluEval are relative to the pretrained model (higher is better).
}
\label{tab:hallucination_science}
\resizebox{1.0\textwidth}{!}{
\begin{tabular}{ll|c|cc|c}
\toprule
\textbf{Model} & \textbf{Method} & \textbf{Task Acc.} $\uparrow$ & \textbf{HaluEval $\Delta$} $\uparrow$ & \textbf{TruthfulQA $\Delta$} $\uparrow$ & \textbf{Brier $\Delta$} $\downarrow$ \\
\midrule
\multirow{10}{*}{\rotatebox[origin=c]{90}{Qwen3-4B}}
 & Base                                      & $29.2$ & $+0.0$   & $+0.0$   & $+0.0$  \\
 & SFT                                & $62.5$ & $-15.3$  & $-5.9$   & $+1.4$  \\
 & FLOW~\citep{sanyal2025upweighting}        & $58.3$ & $-15.5$  & $-3.2$   & $-1.1$  \\
 & DFT~\citep{wu2025generalization}          & $46.7$ & $-7.6$   & $+2.2$   & $+6.5$  \\
 & TALR~\citep{lin2025sft}                   & $48.3$ & $+0.4$   & $+1.5$   & $+2.2$  \\
 & STM~\citep{wu2025mitigating}              & $49.2$ & $-0.6$   & $-10.0$  & $+0.1$  \\
 & L2 Reg~\citep{kirkpatrick2017overcoming}  & $57.5$ & $-39.9$  & $-6.3$   & $+6.7$  \\
 & WiSE-FT~\citep{wortsman2022robust}        & $56.7$ & $-14.5$  & $-2.6$   & $+1.2$  \\
 & LoRA~\citep{hu2022lora}                   & $51.7$ & $-1.6$   & $-0.6$   & $+5.2$ \\
\cmidrule{2-6}
 & \ourmethod                                & $65.0$ & $-0.6$   & $+0.2$   & $+2.6$  \\
\bottomrule
\end{tabular}
}
\end{table}
\begin{table}[h]
\centering
\caption{
Reliability analysis on Galician. Win-Tie reports the tie-adjusted head-to-head rate. HaluEval and TruthfulQA columns show accuracy deltas relative to the pretrained model. Brier $\Delta$ is relative to the pretrained model (lower is better); TruthfulQA and HaluEval are relative to the pretrained model (higher is better).
}
\label{tab:hallucination_galician}
\resizebox{1.0\textwidth}{!}{
\begin{tabular}{ll|c|cc|c}
\toprule
\textbf{Model} & \textbf{Method} & \textbf{Win-Tie} $\uparrow$ & \textbf{HaluEval $\Delta$} $\uparrow$ & \textbf{TruthfulQA $\Delta$} $\uparrow$ & \textbf{Brier $\Delta$} $\downarrow$ \\
\midrule
\multirow{10}{*}{\rotatebox[origin=c]{90}{Qwen3-4B}}
 & Base                                      & $—$       & $+0.0$   & $+0.0$   & $+0.0$  \\
 & SFT                                & $89.0\%$  & $-18.8$  & $-7.8$   & $+6.2$  \\
 & FLOW~\citep{sanyal2025upweighting}        & $90.0\%$  & $-18.8$  & $-7.8$   & $+6.2$  \\
 & DFT~\citep{wu2025generalization}          & $80.0\%$  & $+0.6$   & $+0.0$   & $+1.6$  \\
 & TALR~\citep{lin2025sft}                   & $77.5\%$  & $+11.2$  & $-1.6$   & $+3.7$  \\
 & STM~\citep{wu2025mitigating}              & $81.5\%$  & $-21.3$  & $-12.0$  & $+12.3$ \\
 & L2 Reg~\citep{kirkpatrick2017overcoming}  & $92.0\%$  & $-18.8$  & $-8.3$   & $+5.7$  \\
 & WiSE-FT~\citep{wortsman2022robust}        & $85.5\%$  & $-2.5$   & $-9.4$   & $+6.7$  \\
 & LoRA~\citep{hu2022lora}                   & $90.5\%$  & $-8.2$   & $-10.7$  & $+12.7$ \\
\cmidrule{2-6}
 & \ourmethod                                & $90.0\%$  & $+6.7$   & $-4.0$   & $+3.3$  \\
\bottomrule
\end{tabular}
}
\end{table}
\newpage
\section{Extended Related Work}
\label{app:extended-related-work}
\subsection{Catastrophic Forgetting}
Catastrophic forgetting refers to the degradation of previously acquired knowledge when a model is trained on new data~\citep{mccloskey1989catastrophic, french1999catastrophic}. We organize existing mitigation strategies into three groups.

\paragraph{Methods with Access to Pretraining Data}
Replay mixes old training data into fine-tuning and is the most effective mitigation strategy~\citep{rolnick2019experience, de2019episodic, scialom2022fine}, but pretraining data is rarely available for modern LLMs~\citep{grattafiori2024llama, yang2025qwen3}. Some replay methods also use synthetic data when real old data is unavailable~\citep{huang2024mitigating, chen2024continual, bansal2025context}. However, synthetic replay introduces additional generation and filtering cost, depends on the quality and coverage of the generated data, and changes the fine-tuning distribution. Other methods constrain updates to the orthogonal subspace of old task representations~\citep{lin2022trgp, singh2025mitigating} or identify and protect important parameters~\citep{song2025alleviate}, but similarly require old data and are evaluated only on small image benchmarks such as MNIST~\citep{lecun2002gradient} and CIFAR-10~\citep{krizhevsky2009learning}. Like all data-oblivious methods, \ourmethod\ requires no access to pretraining data.

\paragraph{Data-Oblivious Methods}
Kirkpatrick et al.~\cite{kirkpatrick2017overcoming} regularize weights to stay close to their pretrained values. LoRA~\citep{hu2022lora} constrains updates to a low-rank subspace, reducing forgetting but hurting target-domain performance~\citep{biderman2024lora}. Distillation-based methods~\citep{agarwal2024policy, lu2025onpolicydistillation} replace SFT with student rollouts scored by a stronger teacher; this is computationally expensive and struggles when the base model assigns low probability to relevant sequences~\citep{shao2024deepseekmath}. A separate line of work reduces forgetting by downweighting high-loss tokens or sequences: Sanyal et al.~\cite{sanyal2025upweighting} upweight low-loss sequences, Wu et al.~\cite{wu2025mitigating} mask tokens above a loss threshold, Lin et al.~\cite{lin2025sft} scale each token's loss by $p^{1/\tau}$ where $p$ is the token probability and $\tau$ is a constant, and Wu et al.~\cite{wu2025generalization} rescale gradients by token probability. However, for many tasks learning hard tokens is essential, so suppressing them hurts target performance. \ourmethod\ leaves the training objective unchanged and controls forgetting solely through the learning rate.

\paragraph{Theoretical Analysis}
Forgetting has been analyzed theoretically for linear regression~\citep{evron2022catastrophic}, two-layer CNNs~\citep{li2025towards}, and linear attention~\citep{lee2026fine}, but these results are specific to the simple models considered and do not transfer to general LLMs. Li et al.~\citep{li2024revisiting} propose a sharpness-based optimization method to mitigate forgetting. FINCH instead keeps the standard SFT objective and AdamW setup fixed, and controls forgetting through a loss-adaptive learning-rate schedule.

\subsection{Learning Rate}
The learning rate determines both the speed and outcome of training, with larger rates tending to find wider, better-generalizing minima~\citep{smith2019super} but risking divergence beyond a critical threshold~\citep{lewkowycz2020large}. In practice, schedules typically combine linear warmup~\citep{goyal2017accurate} with cosine decay~\citep{loshchilov2016sgdr}, and this pattern is especially common for transformers~\citep{vaswani2017attention}. The role of warmup has been studied from two complementary angles. Kalra et al.~\cite{kalra2024warmup} argue that small initial learning rates allow the model to move into flatter, better-conditioned regions of the loss landscape before the rate is increased. Kosson et al.~\cite{kosson2024analyzing} attribute early instability to three compounding factors (bias correction inflation in Adam, large angular parameter updates, and high gradient signal-to-noise ratio) and show that warmup implicitly controls all three.

In the context of catastrophic forgetting, Kenneweg et al.~\cite{kenneweg2022intelligent} assign a separate learning rate to each layer in BERT and tune all rates jointly via Bayesian optimization. Scaling this to modern LLMs is infeasible: a 36-layer model such as Qwen3 requires 25 to 30 hyperparameters, the number of required trials scales with dimensionality~\citep{kandasamy2015high}, and each trial requires a full fine-tuning run. Lin et al.~\cite{lin2025sft} prescribe a fixed small learning rate and provide a theoretical justification. They assume each gradient step changes the model by at most $\kappa$ in KL divergence, and under this assumption approximate the updated model via an exponential tilt; they then bound the total change in old-task loss. The KL constraint is assumed rather than derived, and the exponential tilt carries approximation carries an $O(\kappa)$ error per step, accumulating to $O(T\kappa)$ over $T$ steps. Since a smaller learning rate reduces $\kappa$ but increases $T$, the bound does not improve in the regime they advocate. Our approach derives the learning rate for which the KL constraint holds, and our bound applies without the growing error (Theorem~\ref{thm:stepwise-bound}). As shown in Section~\ref{sec:experiments-preserves}, a fixed small learning rate also underperforms empirically, failing to reach competitive target-task accuracy.

Trust-region methods, generally developed for reinforcement learning, also limit update size during training~\citep{schulman2015trust, schulman2017proximal, zhu2025proximal}. However, these methods are not designed specifically for catastrophic-forgetting mitigation and modify or explicitly constrain the training objective. Moreover, probability-ratio constraints can be conservative for low-probability tokens, which is undesirable in our setting where rare vocabulary, new facts, and low-resource-language tokens must still be learned~\cite{qi2026rethinking}. FINCH instead keeps the SFT objective unchanged and studies whether the learning-rate schedule alone can control forgetting.